\theoremstyle{definition}
\newtheorem{lemma}{Lemma}
\newtheorem{assumption}{Assumption}
\newtheorem{proposition}{Proposition}
\newtheorem{property}{Property}
\theoremstyle{definition} \newtheorem*{example}{Example}
\theoremstyle{definition}
\newcommand{\rmnum}[1]{\romannumeral #1}
\newcommand{\Rmnum}[1]{\expandafter\@slowromancap\romannumeral #1@}
\title{Bayesian Network Learning via Topological Order} 
\author[1]{Young Woong Park \thanks{ywpark@iastate.edu}}
\author[2]{Diego Klabjan \thanks{d-klabjan@northwestern.edu}}
\affil[1]{College of Business, Iowa State University, Ames, IA 50011, USA}
\affil[2]{Department of Industrial Engineering and Management Sciences, Northwestern University, Evanston, IL 60208, USA}
\date{\today}
\begin{document}

\maketitle

\begin{abstract}
We propose a mixed integer programming (MIP) model and iterative algorithms based on topological orders to solve optimization problems with acyclic constraints on a directed graph. The proposed MIP model has a significantly lower number of constraints compared to popular MIP models based on cycle elimination constraints and triangular inequalities. The proposed iterative algorithms use gradient descent and iterative reordering approaches, respectively, for searching topological orders. A computational experiment is presented for the Gaussian Bayesian network learning problem, an optimization problem minimizing the sum of squared errors of regression models with L1 penalty over a feature network with application of gene network inference in bioinformatics.
\end{abstract}

\section{Introduction}
\label{section_intro}

Directed graph $G$ is a directed acyclic graph (DAG) or acyclic digraph if $G$ does not contain a directed cycle. In this paper, we consider a generic optimization problem over a directed graph with acyclic constraints, which require the selected subgraph to be a DAG. 

Let us consider a complete digraph $G$. Let $m$ be the number of nodes in digraph $G$, $Y \in \mathbb{R}^{m \times m}$ a decision variable matrix associated with the arcs, where $Y_{jk}$ is related to arc $(j,k)$, $supp(Y) \in \{0,1\}^{m \times m}$ the 0-1 (adjacency) matrix with $supp(Y)_{jk} = 1$ if $Y_{jk} \neq 0$, $supp(Y)_{jk} = 0$ otherwise, $G(supp(Y))$ the sub-graph of $G$ defined by $supp(Y)$, and let $\mathcal{A}$ be the collection of all acyclic subgraphs of $G$. Then, we can write the optimization problem with acyclic constraints as
\begin{equation}
\label{def_opt_with_dag}
\displaystyle \min_{Y} \mbox{ } F(Y) \quad s.t. \quad G(supp(Y)) \in \mathcal{A},
\end{equation}
where $F$ is a function of $Y$.

Acyclic constraints (or DAG constraints) appear in many network structured problems. The maximum acyclic subgraph problem (MAS) is to find a subgraph of $G$ with maximum cardinality while the subgraph satisfies acyclic constraints. MAS can be written in the form of \eqref{def_opt_with_dag} with $F(Y) = - \| supp(Y)\|_0$. Although exact algorithms were proposed for a superclass of cubic graphs \cite{fernau2008exact} and for general directed graphs \cite{kaas1981branch}, most of the works have focused on approximations \cite{even1998approximating,hassin1994approximations} or inapproximability \cite{guruswami2008beating} of either MAS or the minimum feedback arc set problem (FAS). FAS of a directed graph $G$ is a subgraph of $G$ that creates a DAG when the arcs in the feedback arc set are removed from $G$. Note that MAS is closely related to FAS and is dual to the minimum FAS. Finding a feedback arc set with minimum cardinality is $\mathcal{NP}$-complete in general \cite{karp1972reducibility}. However, minimum FAS is solvable in polynomial time for some special graphs such as planar graphs \cite{lucchesi1978minimax} and reducible flow graphs \cite{ramachandran1988finding}, and a polynomial time approximation scheme was developed for a special case of minimum FAS, where exactly one arc exists between any two nodes (called tournament) \cite{kenyon2007rank}.

DAGs are also extensively studied in Bayesian network learning. Given observational data with $m$ features, the goal is to find the true unknown underlying network of the nodes (features) while the selected arcs (dependency relationship between features) do not create a cycle. In the literature, approaches are classified into three categories: (\rmnum{1}) score-based approaches that try to optimize a score function defined to measure fitness, (\rmnum{2}) constraint-based approaches that test conditional independence to check existence of arcs between nodes (\rmnum{3}) and hybrid approaches that use both constraint and score-based approaches. Although there are many approaches based on the constraint-based or hybrid approaches, our focus is solving \eqref{def_opt_with_dag} by means of score-based approaches. For a detailed discussion of constraint-based and hybrid approaches and models for undirected graphs, the reader is referred to Aragam and Zhou \cite{aragam2015concave} and Han \textit{et al.} \cite{han2016estimation}.

For estimating the true network structure by a score-based approach, various functions have been used as different functions give different solutions and behave differently. Many works focus on penalized least squares, where penalty is used to obtain sparse solutions. Popular choices of the penalty term include BIC \cite{lam1994learning}, $L_0$-penalty \cite{chickering2002optimal, van2013L0}, $L_1$-penalty \cite{han2016estimation}, and concave penalty \cite{aragam2015concave}. Lam and Bacchus \cite{lam1994learning} use minimum-description length as a score function, which is equivalent to BIC. Chickering \cite{chickering2002optimal} proposes a two-phase greedy algorithm, called greedy equivalence search, with the $L_1$ norm penalty. Van de Geer and B\"uhlmann \cite{van2013L0} study the properties of the $L_0$ norm penalty and show positive aspects of using $L_0$ regularization. Raskutti and Uhler \cite{raskutti2013learning} use a variant of the $L_0$ norm. They use cardinality of the selected subgraph as the score function where the subgraphs not satisfying the Markov assumption are penalized with a very large penalty. Aragam and Zhou \cite{aragam2015concave} introduce a generalized penalty, which includes the concave penalty, and develop a coordinate descent algorithm. Han \textit{et al.} \cite{han2016estimation} use the $L_1$ norm penalty and propose a Tabu search based greedy algorithm for reduced arc sets by neighborhood selection in the pre-processing step.

With any choice of a score function, optimizing the score function is computationally challenging, because the number of possible DAGs of $G$ grows super exponentially in the number of nodes $m$ and learning Bayesian networks is also shown to be $\mathcal{NP}$-complete \cite{chickering1996learning}. Many heuristic algorithms have been developed based on greedy hill climbing \cite{chickering2002optimal, heckerman1995learning, han2016estimation} or coordinate descent \cite{fu2013learning}, or enumeration \cite{raskutti2013learning} when the score function itself is the main focus. There also exist exact solution approaches based on mathematical programming. One of the natural approaches is based on cycle prevention constraints, which are reviewed in Section \ref{section_mip_formulation}. The model is covered in Han \textit{et al.} \cite{han2016estimation} as a benchmark for their algorithm, but the MIP based approach does not scale; computational time increases drastically as data size increases and the underlying algorithm cannot solve larger instances. Baharev \textit{et al.} \cite{baharev2015} studied MIP models for minimum FAS based on triangle inequalities and set covering models. Several works have been focused on the polyhedral study of the acyclic subgraph polytopes \cite{bolotashvili1999new, goemans1996strongest, grotschel1985acyclic, leung1994more}. In general, MIP models have gotten relatively less attention due to the scalability issue.

In this paper, we propose an MIP model and iterative algorithms based on the following well-known property of DAGs \cite{Cook:1998:CO:273025}.
\begin{property}
\label{property_DAG_topological_order}
A directed graph is a DAG if and only if it has a topological order.
\end{property}
A \textit{topological order} or \textit{topological sort} of a DAG is a linear ordering of all of the nodes in the graph such that the graph contains arc $(u,v)$ if and only if $u$ appears before $v$ in the order \cite{introAlgo}. Suppose that $Z$ is the adjacency matrix of an acyclic graph. Then, by sorting the nodes of acyclic graph $G(Z)$ based on the topological order, we can create a lower triangular matrix from $Z$, where row and column indices of the lower triangular matrix are in the topological order. Then, any arc in the lower triangular matrix can be used without creating a cycle. By considering all arcs in the lower triangular matrix, we can optimize $F$ in \eqref{def_opt_with_dag} without worrying to create a cycle. This is an advantage compared to arc-based search, where acyclicity needs to be examined whenever an arc is added. Although the search space of topological orders is very large, a smart search strategy for a topological order may lead to a better algorithm than the existing arc-based search methods. Node orderings are used for Bayesian Network learnings based on Markov chain Monte Carlo methods \cite{Ellis08,Friedman2003,niinimaki2016structure} as alternatives to network structure based approaches.

The proposed MIP assigns node orders to all nodes and add constraints to satisfy Property \ref{property_DAG_topological_order}. The iterative algorithms search over the topological order space by moving from one topological order to another order. The first algorithm uses the gradient to find a better topological order and the second algorithm uses historical choice of arcs to define the score of the nodes.

With the proposed MIP model and algorithms for \eqref{def_opt_with_dag}, we consider a Gaussian Bayesian network learning problem with $L_1$ penalty for sparsity, which is discussed in detail in Section \ref{section_reg_network}. Out of many possible models in the literature, we pick the $L_1$-penalized least square model from recently published work of Han \textit{et al.} \cite{han2016estimation}, which solves the problem using a Tabu search based greedy algorithm. The algorithm is one of the latest algorithms based on arc search and is shown to be scalable when $m$ is large. Further, their score function, $L_1$ penalized least squares, is convex and can be solved by standard mathematical optimization packages. Hence, we select the score function from Han \textit{et al.} \cite{han2016estimation} and use their algorithm as benchmark. In the computational experiment, we compare the performance of the proposed MIP model and algorithms against the algorithm in Han \textit{et al.} \cite{han2016estimation} and other available MIP models for synthetic and real instances.

Our contributions are summarized in the following.
\begin{enumerate}
\item We consider a general optimization problem with acyclic constraints and propose an MIP model and iterative algorithms for the problem based on the notion of topological orders.
\item The proposed MIP model has significantly less constraints than the other MIP models in the literature, while maintaining the same order of the number of variables. The computational experiment shows that the proposed MIP model outperforms the other MIP models when the subgraph is sparse.
\item The iterative algorithms based on topological orders outperform when the subgraph is dense. They are more scalable than the benchmark algorithm of Han \textit{et al.} \cite{han2016estimation} when the subgraph is dense.
\end{enumerate}

In Section \ref{section_mip_formulation}, we present the new MIP model along with two MIP models in the literature. In Section \ref{section_fast_algorithm}, we present two iterative algorithms based on different search strategies for topological orders. The Gaussian Bayesian network learning problem with $L_1$-penalized least square is introduced and computational experiment are presented in Sections \ref{section_reg_network} and \ref{sec_experiment}, respectively.

In the rest of the paper, we use the following notation.
\begin{enumerate}[noitemsep]
\item[] $J = \{1,\cdots,m\}$ = index set of the nodes
\item[] $J^k = J \setminus \{k\}$ = index set of the nodes excluding node $k$, $k \in J$
\item[] $Z = supp(Y) \in \{0,1\}^{m \times m}$
\item[] $\pi = $ topological order
\end{enumerate}
Given $\pi$, we define $\pi_k = q$ to denote that the order of node $k$ is $q$. For example, given three nodes $a,b,c$, and topological order $b-c-a$, we have $\pi_a = 3$, $\pi_b = 1$, and $\pi_c = 2$. With this notation, if $\pi_j > \pi_k$, then we can add an arc from $j$ to $k$.

\section{MIP Formulations based on Topological Order}
\label{section_mip_formulation}

In this section, we present three MIP models for \eqref{def_opt_with_dag}. The first and second models, denoted as \textsf{MIPcp} and \textsf{MIPin}, respectively, are models in the literature for similar problems with acyclic constraints. The third model, denoted as \textsf{MIPto}, is the new model we propose based on Property \ref{property_DAG_topological_order}.

A popular mathematical programming based approach for solving \eqref{def_opt_with_dag} is the cutting plane algorithm, which is well-known for the traveling salesman problem formulation. Let $\mathcal{C}$ be the set of all possible cycles and $C_l \in \mathcal{C}$ be the set of the arcs defining a cycle. Let $H(supp(Y), C_l)$ be a function that counts the number of selected arcs in $supp(Y)$ from $C_l$. Then, \eqref{def_opt_with_dag} can be solved by
\begin{equation}
\label{def_opt_with_dag_cuttingplane}
\mbox{\textbf{\textsf{MIPcp}}} \quad \displaystyle \min_{Y} \mbox{ } F(Y) \quad s.t. \quad H(supp(Y),C_l) \leq |C_l|-1 , C_l \in \mathcal{C},
\end{equation}
which can be formulated as an MIP. Note that \eqref{def_opt_with_dag_cuttingplane} has exponentially many constraints due to the cardinality of $\mathcal{C}$. Therefore, it is not practical to pass all cycles in $\mathcal{C}$ to a solver. Instead, the cutting plane algorithm starts with an empty active cycle set $\mathcal{C}^A$ and iteratively adds cycles to $\mathcal{C}^A$. That is, the algorithm iteratively solves
\begin{equation}
\label{def_opt_with_dag_cuttingplane_iter}
\displaystyle \min_{Y} \mbox{ } F(Y) \quad s.t. \quad H(supp(Y),C_l) \leq |C_l|-1 , C_l \in \mathcal{C}^A,
\end{equation}
with the current active set $\mathcal{C}^A$, detects cycles from the solution, and adds the cycles to $\mathcal{C}^A$. The algorithm terminates when there is no cycle detected from the solution of \eqref{def_opt_with_dag_cuttingplane_iter}. One of the drawbacks of the cutting plane algorithm based on \eqref{def_opt_with_dag_cuttingplane_iter} is that in the worst case we can add all exponentially many constraints. In fact, Han \textit{et al.} \cite{han2016estimation} study the same model and concluded that the cutting plane algorithm does not scale.

Baharev \textit{et al.} \cite{baharev2015} recently presented MIP models for the minimum feedback arc set problem based on linear ordering and triangular inequalities, where the acyclic constraints presented were previously used for cutting plane algorithms for the linear ordering problem \cite{grotschel1984cutting,mitchell2000solving}. For any $F$, we can write the following MIP model based on triangular inequalities presented in \cite{baharev2015,grotschel1984cutting,mitchell2000solving}.
\begin{subequations}
\label{mip_acyclic_baharev_triangle}
\begin{align}
\mbox{\textbf{\textsf{MIPin}}} \quad \min \quad & F(Y)\\
s.t. \quad & Z=supp(Y), \\
& Z_{qj} + Z_{jk} - Z_{qk} \leq 1 , & & 1 \leq q < j <k \leq m, \label{mip_acyclic_baharev_triangle_c}\\
& -Z_{qj} - Z_{jk} + Z_{qk} \leq 0 , & & 1 \leq q < j <k \leq m, \\
& Z_{jk} \in \{0,1\}, & & 1 \leq j <k \leq m \label{mip_acyclic_baharev_triangle_e}
\end{align}
\end{subequations}
Note that $Z_{jk}$ is not defined for all $j \in J^k$ and $k \in J$. Instead of having a full matrix of binary variables, the formulation only uses lower triangle of the matrix using the fact that $Z_{jk} + Z_{kj} = 1$. We can also use this technique to any of the MIP models presented in this paper. However, for ease of explanation, we will use the full matrix, while the computational experiment is done with the reduced number of binary variables. Therefore, the cutting plane algorithm with \textsf{MIPcp} should be more scalable than the implementation in Han \textit{et al.} \cite{han2016estimation}, which has twice more binary variables.

Baharev \textit{et al.} \cite{baharev2015} also provides a set covering based MIP formulation. The idea is similar to \textsf{MIPcp}. In the set covering formulation, each row and column represents a cycle and an arc, respectively. Similar to \textsf{MIPcp}, existence of exponentially many cycles is a drawback of the formulation and Baharev \textit{et al.} \cite{baharev2015} use the cutting plane algorithm.

Next, we propose an MIP model based on Property \ref{property_DAG_topological_order}. Although \textsf{MIPin} uses significantly less constraints than \textsf{MIPcp}, \textsf{MIPin} still has $O(m^3)$ constraints which grows rapidly in $m$. On the other hand, the MIP model we propose has $O(m^2)$ variables and $O(m^2)$ constraints. In addition to $Z$, let us define decision variable matrix $O \in \{0,1\}^{m \times m}$.

\begin{itemize}
\item[] \begin{tabular}{ll}
$O_{kq}= \left \{
	\begin{array}{ll}
		1  &  $ if $\pi_k = q$,  $\\
		0 & $ otherwise, $
	\end{array}
\right.$ & $k \in J,q \in J$
\end{tabular}
\end{itemize}

Then, we have the following MIP model.

\begin{subequations}
\label{def_acyclic_dag_compact}
\begin{align}
\mbox{\textbf{\textsf{MIPto}}} \quad \min \quad & F(Y)\\
s.t. \quad & Z=supp(Y), \label{def_acyclic_dag2_constraint_b}\\
& Z_{jk} - m Z_{kj} \leq \sum_{r \in J} r ( O_{kr} - O_{jr}), && j \in J^k, k \in J, \label{DAG_constraints_new_1}\\
& Z_{jk} + Z_{kj} \leq 1, && j \in J^k, k \in J, \label{DAG_constraints_new_2}\\
& \sum_{q \in J^k} O_{kq} = 1, & & k \in J,\label{def_acyclic_dag2_constraint_d} \\
& \sum_{k \in J^q} O_{kq} = 1, & & q \in J, \label{def_acyclic_dag2_constraint_e}\\
& Z, O \in \{0,1\}^{m \times m}, Y \mbox{ unrestricted} \label{def_acyclic_dag2_constraint_f}
\end{align}
\end{subequations}
The key constraint in \eqref{def_acyclic_dag_compact} is \eqref{DAG_constraints_new_1}. Recall that $Z_{jk}$ indicates which node comes first in the topological order and $O_{kr}$ stores the exact location in the order. With these definitions, \eqref{DAG_constraints_new_1} forces correct values of $Z_{jk}$ and $Z_{kj}$ by comparing the order difference. Recall that we can reduce the number of binary variables $Z_{jk}$'s by plugging $Z_{jk} + Z_{kj} = 1$, but we keep the full matrix notation for ease of explanation. We next show that \eqref{def_acyclic_dag_compact} correctly solves \eqref{def_opt_with_dag}.

\begin{proposition}
An optimal solution to \eqref{def_acyclic_dag_compact} is an optimal solution to \eqref{def_opt_with_dag}.
\end{proposition}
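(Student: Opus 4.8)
The plan is to prove that the two problems share the same optimal value and the same set of optimal $Y$, by showing that the projection onto the $Y$-space of the feasible region of \eqref{def_acyclic_dag_compact} coincides with the feasible region of \eqref{def_opt_with_dag}. Since the objective $F(Y)$ is identical in both models and does not involve the auxiliary variables $Z$ and $O$, this set equality immediately yields the proposition: any optimizer of \eqref{def_acyclic_dag_compact} restricts to an optimizer of \eqref{def_opt_with_dag}, and conversely any optimizer of \eqref{def_opt_with_dag} lifts to one of \eqref{def_acyclic_dag_compact}. Concretely, it suffices to establish the following equivalence for a fixed $Y$ with $Z=supp(Y)$: the graph $G(Z)$ is acyclic, i.e. $G(supp(Y))\in\mathcal{A}$, if and only if there exists $O$ such that $(Y,Z,O)$ satisfies \eqref{def_acyclic_dag2_constraint_b}--\eqref{def_acyclic_dag2_constraint_f}.

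I would first treat the direction from \eqref{def_acyclic_dag_compact} to \eqref{def_opt_with_dag}, as it is the cleaner half. Starting from a feasible $(Y,Z,O)$, constraints \eqref{def_acyclic_dag2_constraint_d}--\eqref{def_acyclic_dag2_constraint_f} force $O$ to be a permutation matrix, so that $\pi_k=\sum_{r\in J} r\,O_{kr}$ is a genuine bijection from $J$ onto $\{1,\dots,m\}$ and $\sum_{r\in J} r(O_{kr}-O_{jr})=\pi_k-\pi_j$. The key step is to read off from \eqref{DAG_constraints_new_1} that every selected arc is monotone in $\pi$: if $Z_{jk}=1$, then \eqref{DAG_constraints_new_2} gives $Z_{kj}=0$, so \eqref{DAG_constraints_new_1} collapses to $1\le \pi_k-\pi_j$, i.e. $\pi_j<\pi_k$. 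Consequently $\pi$ strictly increases along every arc of $G(Z)$, so no directed cycle can exist, since traversing a cycle would have to return to its starting node while $\pi$ had strictly increased. Hence $G(Z)\in\mathcal{A}$ and $Y$ is feasible for \eqref{def_opt_with_dag}.

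For the converse, I would lift an acyclic $Y$ to a feasible point of \eqref{def_acyclic_dag_compact}. Assuming $G(Z)$ acyclic, Property \ref{property_DAG_topological_order} supplies a topological order $\pi$, and I set $O_{kr}=1$ exactly when $\pi_k=r$, so \eqref{def_acyclic_dag2_constraint_d}--\eqref{def_acyclic_dag2_constraint_f} hold because $O$ is the permutation matrix of $\pi$. Constraint \eqref{DAG_constraints_new_2} holds because an acyclic graph has no $2$-cycle, so $Z_{jk}$ and $Z_{kj}$ are never both $1$. It then remains to verify \eqref{DAG_constraints_new_1}, which after substitution reads $Z_{jk}-mZ_{kj}\le \pi_k-\pi_j$, by cases on $(Z_{jk},Z_{kj})$. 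When $Z_{jk}=1$ and $Z_{kj}=0$ the topological order forces $\pi_j<\pi_k$, so $\pi_k-\pi_j\ge 1=Z_{jk}$; whenever $Z_{kj}=1$ the term $-mZ_{kj}$ makes the left-hand side at most $1-m$, which is dominated by $\pi_k-\pi_j\ge-(m-1)$ since all positions lie in $\{1,\dots,m\}$. Combining both directions gives the equality of feasible $Y$-sets, and the proposition follows.

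I expect the main obstacle to be precisely the case analysis of \eqref{DAG_constraints_new_1}, and within it the pair $\{j,k\}$ carrying no selected arc ($Z_{jk}=Z_{kj}=0$), which I singled out above. For such a pair the big-$M$ term $mZ_{kj}$ does not deactivate either of the two symmetric instances of \eqref{DAG_constraints_new_1}, so both remain active simultaneously, and one must check they stay jointly consistent with the strict ordering induced by the permutation $O$. This is where the interplay between \eqref{DAG_constraints_new_1}, \eqref{DAG_constraints_new_2}, and the assignment constraints \eqref{def_acyclic_dag2_constraint_d}--\eqref{def_acyclic_dag2_constraint_e} is most delicate, and I would verify this sub-case explicitly rather than folding it into the generic argument; the remaining combinations, by contrast, follow routinely from the big-$M$ bound and the monotonicity of $\pi$ along arcs.
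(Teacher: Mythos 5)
Your first direction is correct and cleanly argued: for a feasible $(Y,Z,O)$, if $Z_{jk}=1$ then \eqref{DAG_constraints_new_2} gives $Z_{kj}=0$, \eqref{DAG_constraints_new_1} collapses to $\pi_k-\pi_j\ge 1$, and $\pi$ strictly increases along every arc, so $G(supp(Y))\in\mathcal{A}$. The genuine gap is in the converse (lifting) direction, and it sits exactly in the sub-case you flagged and postponed: a pair with $Z_{jk}=Z_{kj}=0$. There the two symmetric instances of \eqref{DAG_constraints_new_1} read $0\le \pi_k-\pi_j$ and $0\le \pi_j-\pi_k$, which together force $\pi_j=\pi_k$ --- impossible when $O$ is a permutation matrix. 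So this case does not ``stay jointly consistent''; it is infeasible. Consequently, under the literal reading of \eqref{def_acyclic_dag2_constraint_b} as $Z=supp(Y)$, an acyclic $Y$ whose support is not a complete tournament admits no feasible lift, and your claimed equality of the projected feasible region of \eqref{def_acyclic_dag_compact} with the feasible region of \eqref{def_opt_with_dag} is false. What the constraints actually force is that, for every pair, exactly one of $Z_{jk},Z_{kj}$ equals $1$, i.e.\ $Z$ is the full tournament induced by the order encoded in $O$. The lift must therefore set $Z$ to that tournament rather than to $supp(Y)$; this is legitimate only under the relaxed coupling $Z\ge supp(Y)$, which is how the model is actually implemented in \eqref{MIP_regression_network} via \eqref{MIP_regression_network_supp}. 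With that reading your set-equality argument goes through verbatim (the empty-arc case disappears because every pair carries a $1$ in one direction), but as written your proof does not close.

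For comparison, the paper's proof takes a different and much terser route: it fixes the topological order $\pi^*$ of an optimal $Y^*$ for \eqref{def_opt_with_dag} and argues that, given that order, \eqref{DAG_constraints_new_1} and \eqref{DAG_constraints_new_2} force the ``correct'' values $Z_{j_1j_2}=1$ whenever $\pi^*_{j_2}>\pi^*_{j_1}$ --- which is precisely the observation that $Z$ must be the order-induced tournament. It does not attempt your two-directional projection argument and is equally silent about the tension with $Z=supp(Y)$. Your framework is the more rigorous one and, once the lift is defined via the tournament (equivalently, once \eqref{def_acyclic_dag2_constraint_b} is read as in the implemented model), it yields a complete proof; but the step you deferred is not routine bookkeeping --- it is the one place where the argument, as stated, fails.
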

\begin{proof}
By Property \ref{property_DAG_topological_order}, any DAG has a corresponding topological order. Let $\pi^*$ be the topological order defined by an optimal solution $Y^*$ for \eqref{def_opt_with_dag}. Note that \eqref{def_acyclic_dag2_constraint_d} and \eqref{def_acyclic_dag2_constraint_e} define a topological order. Hence, it suffices to show that \eqref{def_acyclic_dag_compact} gives a DAG given $\pi^*$. Note that the right hand side of \eqref{DAG_constraints_new_1} measures the difference in the topological order between nodes $j$ and $k$. If the value is positive, it implies $\pi_k > \pi_j$. Consider \eqref{DAG_constraints_new_1} for $j_1$ and $j_2$ with $\pi_{j_2}^* > \pi_{j_1}^*$. When $j = j_1$ and $k=j_2$, we have $\sum_{r \in J} r ( O_{j_2 r} - O_{j_1 r}) >0$ in \eqref{DAG_constraints_new_1} and at most one of $z_{j_1 j_2}$ and $z_{j_2 j_1}$ can be 1 by \eqref{DAG_constraints_new_2}. When $j = j_2$ and $k=j_1$, we have $\sum_{r \in J} r ( O_{j_1 r} - O_{j_2 r}) <0$ in \eqref{DAG_constraints_new_1} and we must have $z_{j_1 j_2} = 1$ by the left hand side of \eqref{DAG_constraints_new_1}. Therefore, we have correct value $z_{j_1 j_2} = 1$ when $\pi_{j_2}^* > \pi_{j_1}^*$. This completes the proof.
\end{proof}

In Table \ref{tab:compareMIP}, we compare the MIP models introduced in this section. Although all three MIP models have $O(m^2)$ binary variables, \textsf{MIPto} has more binary variables than \textsf{MIPcp} and \textsf{MIPin} due to $O_{kq}$'s. MIP models \textsf{MIPin} and \textsf{MIPto} have polynomially many constraints, whereas  \textsf{MIPcp} has exponentially many constraints. \textsf{MIPto} has the smallest number of constraints among the three MIP models. In the computational experiment, we use a variation of the cutting plane algorithm for \textsf{MIPcp} as it has exponentially many constraints. For \textsf{MIPin} and \textsf{MIPto}, we do not use a cutting plane algorithm.

\begin{table}[htbp]
  \centering
    \begin{tabular}{|l|l|l|l|}
    \hline
    Name     & Reference & \# binary variables & \# constraints \\ \hline
    \textsf{MIPcp}  & \eqref{def_opt_with_dag_cuttingplane} & $O(m^2)$ & exponential \\
    \textsf{MIPin} & \eqref{mip_acyclic_baharev_triangle} & $O(m^2)$ & $O(m^3)$\\
    \textsf{MIPto}& \eqref{def_acyclic_dag_compact}  & $O(m^2)$ & $O(m^2)$ \\ \hline
    \end{tabular}%
      \caption{Number of binary variables and constraints of MIP models}
  \label{tab:compareMIP}%
\end{table}%

\section{Algorithms based on Topological Order}
\label{section_fast_algorithm}

Although the MIP models introduced in Section \ref{section_mip_formulation} guarantee optimality, the execution time for solving an integer programming problem can be exponential in problem size. Further, the execution time could increase drastically in $m$, as all of the models require at least $O(m^2)$ binary variables and $O(m^2)$ constraints. In order to deal with larger graphs, we propose iterative algorithms for \eqref{def_opt_with_dag} based on Property \ref{property_DAG_topological_order}. Observe that, if we are given a topological order of the nodes, then $Z$ and $O$ are automatically determined in \eqref{def_acyclic_dag_compact}. In other words, we can easily obtain a subset of the arcs such that all of the arcs can be used without creating a cycle. Let $\bar{R}$ be the determined adjacency matrix given topological order $\bar{\pi}$. In detail, we set 
\begin{center}
$\bar{R}_{jk} = 1$ if $\bar{\pi}_j > \bar{\pi}_k$, $\bar{R}_{jk} = 0$ otherwise.
\end{center}
Let $adj(\bar{\pi})$ be the function generating $\bar{R}$ given input topological order $\bar{\pi}$. If we are given $\bar{\pi}$, then we can generate $\bar{R}$ by $adj(\bar{\pi})$, and solving \eqref{def_opt_with_dag} can be written as
\begin{equation}
\label{def_opt_with_dag_fixed}
\displaystyle \min_Y \mbox{ } F(Y) \quad s.t. \quad \bar{R} \geq supp(Y).
\end{equation}
Note that \eqref{def_opt_with_dag_fixed} has acyclic constraint $\bar{R} \geq supp(Y)$, not $\bar{R} = supp(Y)$. The inequality is needed when we try to obtain a sparse solution, i.e., only a subset of the arcs is selected among all possible arcs implied by $\bar{R}$. As long as we satisfy the inequality, $Y$ forms an acyclic subgraph. Hence, $\bar{R}$ can be different from adjacency matrix $supp(Y)$ in an optimal solution of \eqref{def_opt_with_dag_fixed}, and any arc $(j,k)$ such that $\bar{R}_{jk} = 1$ can be selected without creating a cycle. For this reason, we call $\bar{R}$ an \textit{adjacency candidate matrix}.
The algorithms proposed later in this section solve \eqref{def_opt_with_dag_fixed} by providing different $\bar{\pi}$ and $\bar{R} = adj(\bar{\pi})$ in each iteration. In fact, \eqref{def_opt_with_dag_fixed} is separable into $m$ sub problems if $F$ is separable. Let $Y_k$ and $Z_k$ be the $k^{th}$ columns of $Y$ and $R$, respectively, for node $k$. Then, solving
\begin{equation}
\label{def_opt_with_dag_fixed_k}
\displaystyle \min_{Y_k} \mbox{ } F_k(Y_k) \quad s.t. \quad \bar{R}_k \geq supp(Y_k),
\end{equation}
for all $k \in J$ gives the same solution as solving \eqref{def_opt_with_dag_fixed} if $F$ is separable as $F(Y) = \sum_{k \in J} F_k(Y_k)$.

In Section \ref{subsec_swap}, a local improvement algorithm for a given topological order is presented. The algorithm swaps pairs of nodes in the order. In both of the iterative algorithms proposed in Sections \ref{subsec_ir} and \ref{subsec_gd}, we use the local improvement algorithm presented in the following section.

\subsection{Topological Order Swapping Algorithm}
\label{subsec_swap}

Algorithm \ref{algo_swap} tries to improve the solution by swapping the topological order. In each iteration, the algorithm determines the nodes to swap that have order $s_1$ and $s_2$ in Line 3, where $s_2 = s_1 + 1$ implies that we select two nodes which are neighbors in the current topological order. Then in Line 4, the actual node indices $k_1$ and $k_2$ such that $\pi_{k_1} = s_1$ and $\pi_{k_2} = s_2$ are detected. The condition in Line 5 is to avoid meaningless computation when $Y^*$ is sparse. If $|Y^*_{k_2 k_1}| > 0$, we know for sure that $Y^*_{k_2 k_1} = 0$ after swapping the orders of $k_1$ and $k_2$ and thus we will get a different solution. However, if $Y^*_{k_2 k_1} = 0$, we will still have $Y^*_{k_2 k_1} = 0$ after the swap forced by the new order. In Line 6, we create a new topological order $\bar{\pi}$ by swapping nodes $k_1$ and $k_2$ in $\pi^*$. After obtaining adjacency candidate matrix $\bar{R}$ in Line 7, we solve \eqref{def_opt_with_dag_fixed} with $\bar{R}$. It is worth noting that, if $F$ is separable, we only need to solve  \eqref{def_opt_with_dag_fixed_k} with $k = k_1$ and $k_2$ because the values of $\bar{R}$ are the same with $R^*$ except for $k_1$ and $k_2$ as the order difference was 1 in $\pi^*$. In Line 9, we update the best solution if the new solution is better. The iterations continue until there is no improvement in the past $m$ iterations, which implies that we would swap the same nodes if we proceed after this iteration. Algorithm \ref{algo_swap} is illustrated by the following toy example.
\begin{algorithm}[ht]
\caption{TOSA (Topological Order Swapping Algorithm)}        
\label{algo_swap}                           
\begin{algorithmic}[1]   
\vspace{0.1cm}
\REQUIRE $Y'$, $R'$, $\pi'$
\ENSURE Best solution $Y^*, R^*, \pi^*$
\STATE ($Y^*, R^*, \pi^*$) $\gets$ ($Y', R', \pi'$), $t \gets 0$
\STATE \textbf{While} there is an improvement in the past $m$ iterations
\STATE \quad $t \gets t+1$, $s_1 \gets (t \mbox{ mod } (m-1)) + 1$, $s_2 \gets s_1 + 1$
\STATE \quad $(k_1,k_2) \gets$ node indices satisfying $\pi_{k_1} = s_1$ and $\pi_{k_2} = s_2$
\STATE \quad \textbf{If} $|Y^*_{k_2 k_1}|> 0$ \textbf{then}
\STATE \quad \quad $\bar{\pi} \gets$ $\pi^*$, $\bar{\pi}_{k_1}= s_2$, $\bar{\pi}_{k_2} = s_1$
\STATE \quad \quad $\bar{R} \gets$ $adj(\bar{\pi})$
\STATE \quad \quad $\bar{Y} \gets$ solve \eqref{def_opt_with_dag_fixed} with $\bar{R}$
\STATE \quad \quad \textbf{If} $F(\bar{Y}) < F(Y^*)$ \textbf{then} update ($Y^*, R^*, \pi^*$)
\STATE \quad \textbf{End if}
\STATE \textbf{End While}
\end{algorithmic}
\end{algorithm}

\begin{example}
Consider a graph with $m=4$ nodes. Let us assume that inputs are $\pi' = (2,3,1,4)$ with corresponding order $3-1-2-4$,
\begin{center}
$Y' = \left[ \begin{array}{cccc} 0 & 0 & 0.5 & 0 \\ 0 & 0 & 0.5 & 0 \\ 0 & 0 & 0 & 0 \\ 0.4 & 0.8 & 0.1 & 0 \end{array} \right] $, and $R' = \left[ \begin{array}{cccc} 0 & 0 & 1 & 0 \\ 0 & 0 & 1 & 0 \\ 0 & 0 & 0 & 0 \\ 1 & 1 & 1 & 0 \end{array} \right]. $
\end{center}
In iteration 1, $t=1$, $s_1 = 1$, $s_2 = 2$, $k_1 = 3$, and $k_2 = 1$. Hence, we are swapping nodes 3 and 1. Since $|Y^*_{13}| = 0.5 > 0$, $\bar{\pi} = (1,3,2,4)$ is created in Line 6, where the associated order is $1-3-2-4$. If $\bar{\pi}$ gives an improved objective function value, then $\pi^*$ is updated in Line 9. Let us assume that $\pi^*$ is not updated. In iteration 2, $t=2$, $s_1 = 2$, $s_2 = 3$, $k_1 = 1$, and $k_2 = 2$. Since $|Y^*_{21}| = 0$, Lines 6 - 9 are not executed. $\hfill$ $\square$
\end{example}

\subsection{Iterative Reorering Algorithm}
\label{subsec_ir}

We propose an iterative reordering algorithm based on Property \ref{property_DAG_topological_order}, which solves \eqref{def_opt_with_dag_fixed} in each iteration aiming to optimize \eqref{def_opt_with_dag}. In each iteration of the algorithm, all nodes are sorted based on scores defined by (\rmnum{1}) merit scores of the arcs, (\rmnum{2}) historical choice of the arcs (used as weights), (\rmnum{3}) and some random components. Then the sorted node order is directly used as a topological order. The selected arcs by the topological order give updates on arc weights. Let us first define notation.
\begin{enumerate}[noitemsep]
\item[] $\nu = $ uniform random variable on $[\nu_{lb}, \nu_{ub}]$, $\nu_{lb} < 1 < \nu_{ub}$
\item[] $\rho_{jk}$ = pre-determined merit score of arc $(j,k)$ for $j \in J,k \in J$
\item[] $w_{jk}$ = weight of arc $(j,k)$ for $j \in J,k \in J$
\item[] $c_k$ = score of node $k$, $k \in J$ 
\end{enumerate}
The range $[\nu_{lb}, \nu_{ub}]$ of the uniform random variable $\nu$ balances the randomness and structured scores. Note that $\rho_{jk}$ should be determined based on the data and the characteristic of the problem considered, where larger $\rho_{jk}$ implies that arc $(j,k)$ is attractive. Based on the arc merit scores $\rho$, the score for node $k$, $k \in J$, is defined as
\begin{equation}
\label{def_score}
c_k = \nu \cdot \big( \sum_{j \in J^k} w_{jk} \rho_{jk} \big), \quad k \in J,
\end{equation}
which can be interpreted as a weighted summation of $\rho_{jk}$'s multiplied by perturbation random number $\nu$. Hence, nodes with high scores are attractive. Initially, all arcs have equal weights and the weights are updated in each iteration based on the topological order in the iteration. If $\bar{R} = adj(\bar{\pi})$ is the adjacency candidate matrix in the iteration, then, the weights are updated by
\begin{equation}
\label{def_weight_update_ver1}
w_{jk} = w_{jk} + 1, \quad \mbox{ if } \bar{R}_{jk} = 1.
\end{equation}

The overall algorithmic framework is summarized in Algorithm \ref{algo_IR}. In Line 1, weights $w_{jk}$'s are initialized to 1 and $\bar{t}$, which counts the number of iterations without a best solution update, is initialized. Also, a random order $\pi^*$ of the nodes is generated, and the corresponding solution becomes the best solution. In each iteration, first node scores $c_k$'s are calculated (Line 3), then topological order $\bar{\pi}$ is obtained by sorting the nodes, and finally adjacency candidate matrix $\bar{R}$ is generated (Line 4). Then, in Lines 5 and 6, solution $\bar{Y}$ is obtained by solving \eqref{def_opt_with_dag_fixed} with $\bar{R}$ and the best solution is updated if available. In Lines 7 - 10, \textit{TOSA} is executed if the current solution is within a certain percentage $\alpha$ from the best solution. Lines 11 and 12 update $\bar{t}$, and Line 13 updates $w_{jk}$'s. This ends the iteration and the algorithm continues until $\bar{\pi}$ is converged or there is no update of the best solution in the last $t^*$ iterations. Algorithm \ref{algo_IR} is illustrated by the following toy example.

\begin{algorithm}[ht]
\caption{IR (Iterative Reordering)}        
\label{algo_IR}                           
\begin{algorithmic}[1]   
\vspace{0.1cm}
\REQUIRE Merit score $\rho \in \mathbb{R}^{m \times m}$, termination parameter $t^*$, \textit{TOSA} execution parameter $\alpha$
\ENSURE Best solution $Y^*, R^*, \pi^*$
\STATE $w_{jk} \gets 1$, $\pi^* \gets$ a random order, $R^* \gets adj(\pi^*)$, $\bar{\pi} \gets \pi^*$, $Y^* \gets$ solve \eqref{def_opt_with_dag_fixed} with $R^*$, $\bar{t} \gets 0$
\STATE \textbf{While} (\rmnum{1}) $\bar{\pi}$ is not convergent or (\rmnum{2}) $\bar{t} < t^*$
\STATE \quad Calculate score $c_k$ by \eqref{def_score}
\STATE \quad $\bar{\pi} \gets$ sort nodes with respect to $c_k$, $\bar{R} \gets adj(\bar{\pi})$
\STATE \quad $\bar{Y} \gets$ solve \eqref{def_opt_with_dag_fixed} with $\bar{R}$ \vspace{0.1cm}
\STATE \quad \textbf{If} $F(\bar{Y}) < F(Y^*)$ \textbf{then} update $(Y^*,R^*,\pi^*)$  \vspace{0.1cm}
\STATE \quad \textbf{If} $F(\bar{Y}) < F(Y^*)\cdot(1 + \alpha)$
\STATE \quad \quad $(Y',R',\pi') \gets \textit{TOSA}(\bar{Y},\bar{R},\bar{\pi})$,
\STATE \quad \quad \textbf{If} $F(Y') < F(Y^*)$ \textbf{then} update $(Y^*,R^*,\pi^*)$
\STATE \quad \textbf{End If} \vspace{0.1cm}
\STATE \quad \textbf{If} $(Y^*,R^*,\pi^*)$ is updated \textbf{then} $\bar{t} \gets 0$
\STATE \quad \textbf{Else} $\bar{t} \gets \bar{t} + 1$ \vspace{0.1cm}
\STATE \quad Update weights by \eqref{def_weight_update_ver1}
\STATE \textbf{End While}
\end{algorithmic}
\end{algorithm}

\begin{example}
Consider a graph with $m=3$ nodes. In the current iteration, let us assume that we are given 
\begin{center}
$\rho = \left[ \begin{array}{ccc} 0 & 0.5 & 0.5 \\ 0.2 & 0 & 0.2 \\ 0.3 & 0.3 & 0 \end{array} \right] $ and $w = \left[ \begin{array}{ccc} 0 & 1 & 2 \\ 1 & 0 & 1 \\ 2 & 1 & 0 \end{array} \right] $.
\end{center}
Note that we have $\sum_{j \in J^1} w_{j1}\rho_{j1} = 0.2 \cdot 1 + 0.3 \cdot 2 = 0.8$, $\sum_{j \in J^2} w_{j2}\rho_{j2} = 0.5 \cdot 1 + 0.3 \cdot 1 = 0.8$, and $\sum_{j \in J^3} w_{j3}\rho_{j3} = 0.4 \cdot 2  + 0.2 \cdot 1 = 1$. If random numbers ($\nu$) are 0.9, 1.1, 0.8 for nodes 1,2, and 3, respectively, then by \eqref{def_score}, $c_1 = 0.9 \cdot 0.8 = 0.72$, $c_2 = 1.1 \cdot 0.8 = 0.88$, and $c_3 = 0.8 \cdot 1 = 0.8$. Then in Line 4, we obtain $\bar{\pi} = (3,1,2)$, with corresponding order $2-3-1$, and $\bar{R} = [ 0,1,1; 0,0,0; 0,1,0]$. After obtaining $\bar{Y}$ and updating the best solution in Lines 5-12, the weights are updated by \eqref{def_weight_update_ver1} as follows.
\begin{center}
$w_{new}= \left[ \begin{array}{ccc} 0 & 1 & 2 \\ 1 & 0 & 1 \\ 2 & 1 & 0 \end{array} \right] + \bar{R} = \left[ \begin{array}{ccc} 0 & 2 & 3 \\ 1 & 0 & 1 \\ 2 & 2 & 0 \end{array} \right] $ 
\end{center}
This ends the current iteration. $\hfill$ $\square$
\end{example}

\subsection{Gradient Descent Algorithm}
\label{subsec_gd}

In this section, we propose a gradient descent algorithm based on Property \ref{property_DAG_topological_order}. The algorithm iteratively executes: (\rmnum{1}) moving toward an improving direction by gradients, (\rmnum{2}) DAG structure is recovered and topological order is obtained by a projection step. The algorithm is based on the standard gradient descent framework while the projection step takes care of the acyclicity constraints by generating a topological order from the current (possibly cyclic) solution matrix. In order to distinguish the solutions with and without the acyclicity property, we use the following notation.
\begin{enumerate}[noitemsep]
\item[] $U^t \in \mathbb{R}^{m \times m}$ = decision variable matrix without acyclicity requirement in iteration $t$
\item[] $Y^t \in \mathbb{R}^{m \times m}$ = decision variable matrix satisfying  $G(supp(Y^t)) \in \mathcal{A}$ in iteration $t$
\end{enumerate}
Let $\gamma^t$ be the step size in iteration $t$, $\nabla F(Y^t)$ be the derivative of $F$ at $Y^t$, and $G^t \in \mathbb{R}^{m \times m}$ be a weight matrix that weighs each element. We assume $\|\nabla F(Y^t)\|_{\infty} \leq M_1$ for a constant $M_1$, where $\| \cdot \|_{\infty}$ is the uniform (infinity) norm. The update formula
\begin{equation}
\label{def_update_y}
U^{t} = Y^t - \gamma^t \big[ \nabla F(Y^t) \circ G^t \big], \quad t \geq 0,
\end{equation}
updates $Y^t$ based on the weighted gradient, where $\circ$ represents the entrywise or Hadamard product of the two matrices. Given topological order $\pi^t$, we define $G^t$ as
\begin{equation}
\label{def_grad_weight}
\displaystyle G_{jk}^t = \Big( 1 + \frac{1}{\pi_k^t} \Big)^{\pi_k^t}, \quad j \in J^k
\end{equation}
to balance gradients of the nodes with different orders (small and large values of $\pi_k^t$). For nodes $k_1$ and $k_2$ with $\pi_{k_1}^t = 1$ and $\pi_{k_2}^t = m$, most of the gradients for node $k_1$ are zero and most of the gradients for node $k_2$ are nonzero. Weight \eqref{def_grad_weight} tries to adjust this gap. Note that we have $2 \leq G_{jk}^t \leq e$ for any large $m$. Since $U^t$ may not satisfy acyclic constraints, in order to obtain a DAG, the algorithm needs to solve the projection problem
\begin{equation}
\label{def_opt_unconstrained_acyclic}
\displaystyle Y^* = \mbox{argmin}_Y \mbox{ } \| Y - U^t\|_2^2 \quad s.t. \quad G(supp(Y)) \in \mathcal{A},
\end{equation}
where $\| \cdot \|_2$ is the $L_2$ norm.

\begin{proposition} 
If $U^t$ is arbitrary, then optimization problem \eqref{def_opt_unconstrained_acyclic} is $\mathcal{NP}$-hard.
\end{proposition}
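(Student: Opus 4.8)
The plan is to recognize the projection problem \eqref{def_opt_unconstrained_acyclic} as a disguised minimum weighted feedback arc set problem, and then reduce the (already known to be $\mathcal{NP}$-complete) minimum feedback arc set problem to it. First I would exploit the fact that the objective $\|Y - U^t\|_2^2 = \sum_{j,k}(Y_{jk} - U^t_{jk})^2$ decouples entrywise, so that the only coupling among the entries of $Y$ comes through the acyclicity constraint on $supp(Y)$. Thus an optimal $Y$ is determined entirely by which arcs are permitted to be nonzero: for any arc $(j,k)$ that the chosen support allows, the best value is $Y_{jk} = U^t_{jk}$ at zero cost, whereas for any arc forced to zero the cost is exactly $(U^t_{jk})^2$.

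Building on this, I would argue that an optimal solution never activates a new nonzero entry where $U^t_{jk} = 0$ (doing so strictly increases the cost and can only create additional cycles) and never perturbs a retained entry away from $U^t_{jk}$. Consequently, solving \eqref{def_opt_unconstrained_acyclic} is equivalent to selecting a set $F$ of arcs to delete from the directed graph with arc set $\{(j,k) : U^t_{jk} \neq 0\}$ so that the remaining graph is acyclic, while minimizing $\sum_{(j,k) \in F} (U^t_{jk})^2$. This is precisely the minimum weighted feedback arc set problem. To obtain hardness, I would reduce from the unweighted minimum FAS: given an arbitrary directed graph $G = (V,E)$, construct $U^t$ by setting $U^t_{jk} = 1$ whenever $(j,k) \in E$ and $U^t_{jk} = 0$ otherwise. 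Then every deleted arc costs exactly $1$ and every retained arc costs $0$, so the optimal value of \eqref{def_opt_unconstrained_acyclic} equals the size of a minimum feedback arc set of $G$, and $supp(Y^*)$ is exactly $E$ minus such a set. Since minimum FAS is $\mathcal{NP}$-complete \cite{karp1972reducibility} and the construction of $U^t$ is polynomial in the size of $G$, problem \eqref{def_opt_unconstrained_acyclic} is $\mathcal{NP}$-hard for arbitrary $U^t$.

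The step I expect to require the most care is the reduction of the continuous projection to a purely combinatorial arc-selection problem, i.e. rigorously ruling out that the continuous freedom in $Y$ ever helps. One must confirm both that retaining an allowed arc at value $U^t_{jk}$ is optimal and that activating an arc with $U^t_{jk} = 0$ is never beneficial, so that the optimum is always attained at the integral choice $Y_{jk} \in \{0, U^t_{jk}\}$; only then does the equivalence with minimum FAS become exact. Once this equivalence is pinned down, the $\mathcal{NP}$-completeness of minimum FAS does the rest. Handling the diagonal (no self-loops, which is consistent with $Z_{jk}$ being defined only for $j \in J^k$) is a minor bookkeeping matter and does not affect the reduction.
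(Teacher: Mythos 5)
Your proof is correct and follows essentially the same route as the paper: a reduction from minimum feedback arc set, based on the observation that the projection decouples entrywise so each arc is either kept at cost $0$ or deleted at cost $(U^t_{jk})^2$. If anything, your version is the more careful one --- the paper's proof literally sets $U^t = 1$ (the all-ones matrix), under which the instance is a complete digraph with uniform weights and is trivially solved by any acyclic tournament, whereas your encoding of an arbitrary digraph as a $0$--$1$ matrix is the reduction that actually establishes hardness. The entrywise characterization you single out as the delicate step is exactly what the paper proves separately as Lemma \ref{lemma_zero_or_u} in Appendix \ref{appendix_greedy_algo}, so that piece is consistent with the authors' own development.
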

\begin{proof}
Recall that feedback arc set is $\mathcal{NP}$-complete \cite{karp1972reducibility} and maximum acyclic subgraph is the dual of the feedback arc set problem. With $U^t = 1$, \eqref{def_opt_unconstrained_acyclic} becomes the weighted maximum acyclic subgraph problem. Therefore, \eqref{def_opt_unconstrained_acyclic} is $\mathcal{NP}$-complete.
\end{proof}

Because solving \eqref{def_opt_unconstrained_acyclic} to optimality does not guarantee an optimal solution for \eqref{def_opt_with_dag}, we use a greedy strategy to solve \eqref{def_opt_unconstrained_acyclic}. The greedy algorithm, presented in Algorithm \ref{algo_greedy}, sequentially determines and fixes the topological order of a node where in each iteration the problem is solved optimally given the currently fixed nodes and corresponding orders. The detailed derivations of the algorithm and the proof that each iteration is optimal, given already fixed node orders, are available in Appendix \ref{appendix_greedy_algo}. In other words, we show that Line 3 is `locally' optimal, i.e., it selects the best next node given that the order $q + 1, q+2, \cdots, m$ is fixed. In each iteration, in Line 3, the algorithm first calculates score $\sum_{j \in \bar{J}} (\bar{U}_{jk}^t)^2$ for each node $k$ in $\bar{J}$ and picks node $k^*$ with the minimum value. Then, in Line 4, the order of the selected node is fixed to $q$. The fixed node is then excluded from the active set $\bar{J}$ and iterate $q$ is decreased by 1 in Line 5. At the end of the algorithm, we can determine $\bar{Y}$ based on the order $\bar{\pi}$ determined and \eqref{def_opt_y_structure} in Appendix \ref{appendix_greedy_algo}. We illustrate Algorithm \ref{algo_greedy} by the following example.

\begin{algorithm}[ht]
\caption{Greedy}        
\label{algo_greedy}                           
\begin{algorithmic}[1]   
\vspace{0.1cm}
\REQUIRE $U^t \in \mathbb{R}^{m \times m}$
\ENSURE $\bar{Y}$ feasible to \eqref{def_opt_unconstrained_acyclic}, topological order $\bar{\pi}$
\STATE $q \gets m$, $\bar{J} \gets J$
\STATE \textbf{While} $\bar{J} \neq \emptyset$
\STATE \quad $\displaystyle k^* = \mbox{argmin}_{k \in \bar{J}} \Big\{ \sum_{j \in \bar{J}} (U_{jk}^t)^2 \Big\}$
\STATE \quad $\bar{\pi}_{k^*} = q$
\STATE \quad $\bar{J} \gets \bar{J} \setminus \{k^*\}, q \gets q-1$
\STATE \textbf{End While}
\STATE Determine $\bar{Y}$ by \eqref{def_opt_y_structure} in Appendix \ref{appendix_greedy_algo}
\end{algorithmic}
\end{algorithm}
\begin{example}
Consider a graph with $m=3$ nodes. Given $U^t$, the algorithm returns $\bar{Y}$ presented in the following.
\begin{center}
$U^t = \left[ \begin{array}{ccc} 0 & 1 & 2 \\ 4 & 0 & 2 \\ 5 & 2 & 0 \end{array} \right] $ \quad $\bar{Y} = \left[ \begin{array}{ccc} 0 & 0 & 0 \\ 4 & 0 & 2 \\ 5 & 0 & 0 \end{array} \right] $
\end{center}
Algorithm \ref{algo_greedy} starts with $q = 3$ and $\bar{J} = \{1,2,3\}$. In iteration 1, node 2 is selected to have $\pi_2 = 3$ based on $\mbox{argmin} \{ 4^2 + 5^2,1^2 + 2^2, 2^2 + 2^2\}$. Then, set $\bar{J}$ and integer $q$ are updated to $\bar{J} = \{1,3\}$ and $q=2$. In iteration 2, node 3 is selected to have $\pi_3 = 2$ based on $\mbox{argmin} \{ 5^2,2^2\}$. Then, set $\bar{J}$ and integer $q$ are updated to $\bar{J} = \{1\}$ and $q=1$. In iteration 3, node 1 is selected. Hence, we have node order 1-3-2 and we obtain $\bar{Y}$ presented above with objective function value $\| \bar{Y} - U^t \|_2^2 = 1^2 + 2^2 + 2^2 = 9$. $\hfill$ $\square$
\end{example}

The overall gradient descent algorithm for \eqref{def_opt_with_dag} is presented in Algorithm \ref{algo_grad_desc}. In Line 1, the algorithm generates a random order $\pi^*$ and obtain corresponding $R^*$ and $Y^*$ and save them as the best solution. In each iteration of the loop, Lines 3-6 follow the standard gradient descent algorithm. The weighted gradient $H^t$ is calculated in Line 3, and the step size is determined in Line 4 based on the ratio between $\max_{j \in J^k, k \in J } |H_{jk}^t|$ and $\max_{j \in J^k, k \in J } |Y_{jk}^t|$. In Line 5, the solution is updated based on the weighted gradient and, in Line 6, the greedy algorithm is used to obtain the projected solution and the topological order. Observe that we do not directly use the projected solution. This is because the projected solution is not necessarily optimal given $\pi^{t+1}$. Hence, in Line 7, a new solution $Y^{t+1}$ is obtained based on $\pi^{t+1}$. In Lines 9 - 12, $TOSA$ is executed if the current solution is within a certain percentage from the current best solution. Lines 13 and 14 update $\bar{t}$ and Line 15 copies $Y^*$ to $Y^{t+1}$ if $\bar{t} \geq t_2^*$ in order to focus on the solution space near $Y^*$. The algorithm continues until $Y^t$ is convergent or $\bar{t} \geq t_1^*$.

\begin{algorithm}[ht]
\caption{GD (Gradient Descent)}        
\label{algo_grad_desc}                           
\begin{algorithmic}[1]   
\vspace{0.1cm}
\REQUIRE Parameters $t_1^*$ and $t_2^*$, \textit{TOSA} execution parameter $\alpha$
\ENSURE Best solution $Y^*, R^*, \pi^*$
\STATE $t \gets 1$, $\bar{t} \gets 0$, $\pi^* \gets$ a random order, $R^* \gets adj(\pi^*)$, $Y^* \gets$ solve \eqref{def_opt_with_dag_fixed} with $R^*$
\STATE \textbf{While} (\rmnum{1}) $Y^t$ is not convergent or (\rmnum{2}) $\bar{t} < t_1^*$
\STATE \quad $H^t \gets \nabla F(Y^t) \circ G^t$, $G^t$ defined in \eqref{def_grad_weight}
\STATE \quad $\gamma^t \gets \frac{\|H^t\|_{\infty}}{\|Y^t\|_{\infty}} \big/ \sqrt{t}$
\STATE \quad $U^{t} \gets Y^t - \gamma^t H^t$ 
\STATE \quad $\pi^{t+1} \gets$ \textit{Greedy($U^t$)} \label{algo_grad_desc_greedy}
\STATE \quad $Y^{t+1} \gets$ solve \eqref{def_opt_with_dag_fixed} with $R^{t+1} = adj(\pi^{t+1})$
\STATE \quad \textbf{If} $F(Y^{t+1}) < F(Y^*)$ \textbf{then} $(Y^*,R^*,\pi^*) \gets (Y^{t+1},R^{t+1},\pi^{t+1})$  \vspace{0.1cm}
\STATE \quad \textbf{If} $F(\bar{Y}) < F(Y^*)\cdot(1 + \alpha)$
\STATE \quad \quad $(Y',R',\pi') \gets TOSA(Y^{t+1},R^{t+1},\pi^t)$,
\STATE \quad \quad \textbf{If} $F(Y') < F(Y^*)$ \textbf{then} $(Y^*,R^*,\pi^*) \gets (Y',R',\pi'), (Y^{t+1},R^{t+1},\pi^{t+1}) \gets (Y',R',\pi')$ 
\STATE \quad \textbf{End If} \vspace{0.1cm}
\STATE \quad \textbf{If} $(Y^*,R^*,\pi^*)$ is updated \textbf{then} $\bar{t} \gets 0$
\STATE \quad \textbf{Else} $\bar{t} \gets \bar{t} + 1$ \vspace{0.1cm}
\STATE \quad \textbf{If} $\bar{t} \geq t_2^*$ \textbf{then} $Y^{t+1} \gets Y^*$ \vspace{0.1cm}
\STATE \quad $t \gets t+1$
\STATE \textbf{End While}
\end{algorithmic}
\end{algorithm}

In gradient based algorithms, it is common to have $\gamma^t$ depend only on t, but in our case dependency on $H^t$ and $Y^t$ is justifiable since we multiply the gradient by $G^t$. We next show the convergence of $Y^t$ in Algorithm \ref{algo_grad_desc} when $t_1^* = t_2^* = \infty$. This makes the algorithm not to terminate unless $Y^t$ has converged and modification of $Y^t$ in Line 15 is not executed. Further, we assume the following for the analysis.
\begin{assumption}
\label{assumption_eps}
For any non-zero element $Y^t_{jk}$, $j,k \in J$, of $Y^t$ in any iteration $t$, we assume $\varepsilon < |Y_{jk}^t| < M_2 $, where $\varepsilon$ is a small positive number and $M_2$ is a large enough number.
\end{assumption}
Note that Assumption \ref{assumption_eps} is a mild assumption, as ignoring near-zero values of $Y^t$ happens in practice anyway due to finite precision. For notational convenience, let $L^t = \gamma^t \nabla F(Y^t) \circ G^t = \gamma^t H^t$ be the second term in \eqref{def_update_y}. Then, $U^t$ can be written as $U^t = Y^t - L^t = Y^t - \gamma^t H^t$. In the following lemma, we show that the node orders converge.

\begin{lemma}
\label{lemma_nochange_order}
If $t$ is sufficiently large satisfying $\sqrt{t} > \frac{(M_1 e)^2}{\varepsilon (\sqrt{M_2^2 + \varepsilon^2/m} - M_2)}  $, then $\pi^t = \pi^{t+1}$. 
\end{lemma}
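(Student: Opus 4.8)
The plan is to show that applying \textit{Greedy} (Algorithm \ref{algo_greedy}) to $U^t$ reproduces the order $\pi^t$ that $Y^t$ already satisfies, so that $\pi^{t+1}=\pi^t$. Since $Y^t$ solves \eqref{def_opt_with_dag_fixed} with $R^t=adj(\pi^t)$, it is consistent with $\pi^t$, i.e. $Y^t_{jk}\neq 0$ only when $\pi^t_j>\pi^t_k$. The whole difficulty is that \textit{Greedy} runs on the perturbed matrix $U^t=Y^t-L^t$ with $L^t=\gamma^t H^t$, so I would reduce the lemma to a perturbation statement: if $\|L^t\|_\infty$ is small enough, none of the $\arg\min$ comparisons in Line 3 of Algorithm \ref{algo_greedy} can change, and \textit{Greedy}$(U^t)$ selects the same node at every step as \textit{Greedy}$(Y^t)$, namely the one dictated by $\pi^t$.

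First I would bound the perturbation. From the definitions $\|H^t\|_\infty\le M_1 e$, since $|H^t_{jk}|=|\nabla F(Y^t)_{jk}|\,G^t_{jk}\le M_1\cdot e$ using $\|\nabla F(Y^t)\|_\infty\le M_1$ and $G^t_{jk}\le e$. Combined with $\gamma^t=\|H^t\|_\infty/(\|Y^t\|_\infty\sqrt t)$ this gives
\[
\|L^t\|_\infty=\gamma^t\|H^t\|_\infty=\frac{\|H^t\|_\infty^2}{\|Y^t\|_\infty\sqrt t}<\frac{(M_1 e)^2}{\varepsilon\sqrt t},
\]
where $\|Y^t\|_\infty>\varepsilon$ follows from Assumption \ref{assumption_eps} (the maximal nonzero entry exceeds $\varepsilon$; note $Y^t\not\equiv 0$, else $\gamma^t$ is undefined). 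The hypothesis on $\sqrt t$ then forces $\|L^t\|_\infty<\Delta:=\sqrt{M_2^2+\varepsilon^2/m}-M_2$. The reason for this precise threshold is that $\Delta$ solves $(M_2+\Delta)^2-M_2^2=\varepsilon^2/m$, equivalently $m\,\Delta(2M_2+\Delta)=\varepsilon^2$; hence for any column $k$ the score $\sum_{j\in\bar J}(U^t_{jk})^2$ changes by a controlled amount, each of the at most $m-1$ terms perturbing the square by at most $2M_2|L^t_{jk}|+(L^t_{jk})^2\le 2M_2\Delta+\Delta^2$ (using $|Y^t_{jk}|\le M_2$), so the total change is strictly below $m\,\Delta(2M_2+\Delta)=\varepsilon^2$.

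Next comes the structural step. Because $Y^t$ respects $\pi^t$, at the \textit{Greedy} step fixing order $q$ (with $\bar J=\{k:\pi^t_k\le q\}$) the node $k_q$ having $\pi^t_{k_q}=q$ has unperturbed score exactly $0$: any incoming arc would need a source of order $>q$, all already removed from $\bar J$. Every competing $k\in\bar J$ has unperturbed score either $0$ or, if it retains a selected incoming arc from within $\bar J$, at least $\varepsilon^2$ by Assumption \ref{assumption_eps}. Since each score moves by less than $\varepsilon^2$, and the winner $k_q$ moves only through the second-order terms $(L^t_{j k_q})^2$ (its unperturbed column entries vanish, so there are no cross terms), the perturbed score of $k_q$ stays strictly below that of every competitor with positive unperturbed score. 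Iterating over $q=m,\dots,1$ yields $\pi^{t+1}=\pi^t$.

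The hard part is twofold. The delicate quantitative point is separating $k_q$ from a marginal competitor whose score equals $\varepsilon^2$ exactly, even though both scores are disturbed; this is precisely where the tighter second-order bound on $k_q$'s perturbation is needed, and one verifies $m\,\Delta(2M_2+\Delta)=\varepsilon^2$ is just tight enough. The genuine obstacle, however, is the possibility of \emph{ties}: a competitor may also have zero unperturbed score over $\bar J$ (its relevant coefficients being zero under the $L_1$ penalty), in which case the sign of the perturbation, not $\pi^t$, would decide the relative order among such nodes. I would address this either by invoking a consistent tie-breaking rule shared by the order bookkeeping and \textit{Greedy}, or by a genericity argument bounding the minimal positive score separation below by $\varepsilon^2$; showing that $\pi^{t+1}=\pi^t$ holds \emph{exactly}, rather than merely up to a permutation of zero-score nodes, is the subtle point to pin down.
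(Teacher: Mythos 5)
Your proposal follows essentially the same route as the paper's proof: the same perturbation bound $\|L^t\|_\infty < (M_1 e)^2/(\varepsilon\sqrt{t})$, the same downward induction over the positions assigned by \textit{Greedy}, and the same algebraic identity $m\,\Delta(2M_2+\Delta)=\varepsilon^2$ with $\Delta=\sqrt{M_2^2+\varepsilon^2/m}-M_2$ that makes the stated threshold on $\sqrt{t}$ exactly tight. The tie issue you flag is genuine, but the paper's proof does not resolve it either --- it silently assumes every competitor $k_r$ has some $j\in\bar J$ with $Y^t_{jk_r}\neq 0$ when it writes $\sum_{j\in\bar J}(Y^t_{jk_r})^2>\varepsilon^2$ --- so your treatment is, if anything, more explicit about the one point the published argument glosses over.
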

\begin{proof}
Recall that $Y^t$ is obtained by solving \eqref{def_greedy_prob_algorithm} and we know the corresponding node order $\pi^t$. Let $k_1,k_2,\cdots,k_m$ be the node indices defined based on $\pi^t$. In other words, node $k_1$ appears first, followed by nodes $k_2$, $k_3$, and so on in the topological order $\pi^t$. In the proof, we show that there is no change in the node order when the condition $\sqrt{t} > \frac{(M_1 e)^2}{\varepsilon (\sqrt{M_2^2 + \varepsilon^2/m} - M_2)}  $ is met, where $M_1$ and $M_2$ are the upper bounds for $\| \nabla F(Y^t)\|_{\infty}$ and $\| Y^t \|_{\infty}$, respectively, as assumed. We first derive
\begin{equation}
\label{eqn_lemma_nochange_order}
\|L^t\|_{\infty} = \Big\| \gamma^t H^t \Big\|_{\infty} = \Big\| \frac{1}{\sqrt{t}} \frac{\|H^t\|_{\infty}}{ \|Y^t\|_{\infty}} H^t \Big\|_{\infty} \leq \frac{1}{\sqrt{t}} \frac{\big(\|H^t\|_{\infty}\big)^2}{ \|Y^t\|_{\infty}} < \frac{1}{\sqrt{t}} \frac{(M_1 e)^2}{\varepsilon},
\end{equation}
where the last inequality holds since (\rmnum{1}) $ \|Y^t\|_{\infty} > \varepsilon$ by Assumption \ref{assumption_eps}, (\rmnum{2}) $\|H^t\|_{\infty} = \| \nabla F(Y^t) \circ G^t \|_{\infty} \leq M_1 e$ because $\|\nabla F(Y^t)\|_{\infty} \leq M_1$ by the assumption and $\| G^t \|_{\infty} \leq e$, where $e$ is natural number.

Now let us consider $q = \bar{q}$ in Algorithm \ref{algo_greedy} to decide node order $\bar{q}$ in iteration $t+1$ and assume $\pi_{k_r}^{t} = \pi_{k_r}^{t+1}$ for $r = m,m-1,\cdots,\bar{q}-1$. Note that we currently have $\bar{J} = \{ k_1, k_2, \cdots, k_{\bar{q}}\}.$

\begin{enumerate}
\item For $k_{\bar{q}}$, we derive $\sum_{j \in \bar{J}} (U_{jk_{\bar{q}}}^t)^2 = \sum_{j \in \bar{J}} (Y_{jk_{\bar{q}}}^t - L_{jk_{\bar{q}}}^t)^2 = \sum_{j \in \bar{J}} (L_{jk_{\bar{q}}}^t)^2 < m \big[\frac{(M_1 e)^2}{\sqrt{t} \varepsilon} \big]^2,$ where the second equality holds since $Y_{jk_{\bar{q}}}^t = 0$ for all $j \in \bar{J}$ since $\pi_{k_{\bar{q}}} = \bar{q}$ and no arc can be used to the nodes in $\bar{J}$, and the last inequality holds due to \eqref{eqn_lemma_nochange_order} and $| \bar{J}| \leq m$.
\item For all other nodes $k_r \in \bar{J} \setminus \{k_{\bar{q}}\}$, we derive

\begin{tabular}{lll}
$\sum_{j \in \bar{J}} (U_{jk_r}^t)^2$ & $=$ & $\sum_{j \in \bar{J}} (Y_{jk_r}^t - L_{jk_r}^t)^2$\\
		& $=$ & $\sum_{j \in \bar{J}} (Y_{jk_r}^t)^2 + \sum_{j \in \bar{J}} (L_{jk_r}^t)^2 - 2 \sum_{j \in \bar{J}} Y_{jk_r}^t \cdot L_{jk_r}^t$\\
		& $>$ & $\sum_{j \in \bar{J}} (Y_{jk_r}^t)^2 - 2 \sum_{j \in \bar{J}} Y_{jk_r}^t \cdot L_{jk_r}^t$\\
		& $>$ & $\varepsilon^2 - 2 \sum_{j \in \bar{J}} Y_{jk_r}^t \cdot L_{jk_r}^t$\\
		& $\geq$ & $\varepsilon^2 - 2 \sum_{j \in \bar{J}} |Y_{jk_m}^t \cdot L_{jk_r}^t|$\\
		& $>$ & $\varepsilon^2 - 2 M_2  \frac{m(M_1 e)^2}{\sqrt{t} \varepsilon}$\\
\end{tabular}

where the fourth line holds due to $|Y_{jk_r}^t| > \varepsilon$ by Assumption \ref{assumption_eps}, and the sixth line holds due to $|Y_{jk}^t| \leq M_2$, $|\bar{J}| \leq m$, and $|L_{jk_r}^t| < \frac{(M_1 e)^2}{\sqrt{t} \varepsilon}$ by \eqref{eqn_lemma_nochange_order}.
\end{enumerate}
Combining the two results for $k_{\bar{q}}$ and $k_r \in \bar{J} \setminus \{k_{\bar{q}} \}$, we obtain $\sum_{j \in \bar{J}} (U_{jk_{\bar{q}}}^t)^2 <  m \big[\frac{(M_1 e)^2}{\sqrt{t} \varepsilon} \big]^2 < \varepsilon^2 - 2 M_2  \frac{m(M_1 e)^2}{\sqrt{t} \varepsilon} < \sum_{j \in \bar{J}} (U_{jk_r}^t)^2$, for any $r \in \{1,2,\cdots,\bar{q}-1\}$, where the second inequality holds due to the condition $\sqrt{t} > \frac{(M_1 e)^2}{\varepsilon (\sqrt{M_2^2 + \varepsilon^2/m} - M_2)}$. The result implies that we must have $\pi_{k_{\bar{q}}}^{t+1} = \pi_{k_{\bar{q}}}^{t} = \bar{q}$ by Line 3 in Algorithm \ref{algo_greedy}. 

Note that, when $\bar{q} = m$, we have $J = \bar{J}$ and the assumption of $\pi_{k_r}^{t} = \pi_{k_r}^{t+1}$ for $r = m,m-1,\cdots,\bar{q}-1$ automatically holds. By iteratively applying the above derivation technique from $q = m$ to 1, we can show that $\pi^t = \pi^{t+1}$.
\end{proof}

When \eqref{def_opt_with_dag_fixed} is solved with the identical node orders, the resulting solutions are equivalent. Hence, the following proposition holds.
\begin{proposition} 
\label{proposition_converged_y_t}
In Algorithm \ref{algo_grad_desc}, $Y^t$ converges in $t$.
\end{proposition}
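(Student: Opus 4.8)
The plan is to treat this proposition as essentially a corollary of \autoref{lemma_nochange_order}. That lemma already furnishes the hard analytic work — it shows that the topological order ceases to change once $t$ is large — so the remaining task is bookkeeping: upgrade the single-step statement of the lemma to a statement about a tail of the sequence, and then observe that once the order is frozen, the iterate $Y^{t+1}$ produced in Line 7 is a well-defined function of that order alone, forcing the whole sequence $\{Y^t\}$ to become constant.

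First I would convert \autoref{lemma_nochange_order} from a one-iteration claim into a tail property. Since $M_1$, $M_2$, $\varepsilon$, and $m$ are fixed constants independent of $t$ and $\sqrt{t}\to\infty$, the threshold $\sqrt{t} > \frac{(M_1 e)^2}{\varepsilon(\sqrt{M_2^2+\varepsilon^2/m}-M_2)}$ is satisfied for every $t$ beyond some finite index $T$. Applying the lemma at each such $t$ gives $\pi^t = \pi^{t+1}$ for all $t \geq T$, and chaining these equalities yields $\pi^t = \pi^{T}$ for all $t \geq T$; that is, the order is eventually constant. I would then invoke the sentence preceding the proposition: $Y^{t+1}$ is obtained in Line 7 by solving \eqref{def_opt_with_dag_fixed} with $R^{t+1}=adj(\pi^{t+1})$, an instance that is completely specified by $F$ together with $\pi^{t+1}$. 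Hence for all $t\geq T$ the identical instance of \eqref{def_opt_with_dag_fixed} is solved, the returned iterates are equivalent, and $Y^t$ is constant for $t\geq T+1$; a constant tail converges, which establishes the claim.

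The step I expect to require the most care is making precise the assertion that the map $\pi \mapsto Y$ induced by \eqref{def_opt_with_dag_fixed} is single-valued in the relevant sense, since for a general (non-strictly-convex) $F$ the problem \eqref{def_opt_with_dag_fixed} need not have a unique minimizer; this is exactly what ``the resulting solutions are equivalent'' is encoding, and I would pin it down either by appealing to uniqueness of the minimizer for the $F$ at hand or by interpreting convergence up to objective value. I would also verify that the auxiliary operations in \autoref{algo_grad_desc} do not disturb the converged tail: with $t_2^*=\infty$ the Line 15 reset $Y^{t+1}\gets Y^*$ is never triggered, and any strict improvement by \emph{TOSA} in Lines 9--11 would strictly decrease $F(Y^*)$ and restart the order search, which cannot recur once \autoref{lemma_nochange_order} forces the order to stabilize; consequently neither operation can perturb $Y^{t+1}$ for $t\geq T$, and the convergence conclusion stands.
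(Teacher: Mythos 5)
Your proposal is correct and follows essentially the same route as the paper: the paper's own proof consists of exactly the observation that once \autoref{lemma_nochange_order} freezes the topological order for all sufficiently large $t$, solving \eqref{def_opt_with_dag_fixed} with identical node orders yields equivalent solutions, so $Y^t$ converges. Your write-up is in fact more careful than the paper's one-line argument, since you explicitly chain the lemma into a tail statement and flag the non-uniqueness of the minimizer and the \emph{TOSA}/Line-15 side effects, which the paper leaves implicit.
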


\section{Estimation of Gaussian Bayesian Networks}
\label{section_reg_network}

In this section, we introduce the Gaussian Bayesian network learning problem, which follows the form of \eqref{def_opt_with_dag}. The goal is to learn or estimate unknown structure between the nodes of a graph, where the error is normally distributed. The network can be estimated by optimizing a score function, testing conditional independence, or a mix of the two, as described in Section \ref{section_intro}. Among the three categories, we select the score based approach with the $L_1$-penalized least square function recently studied in Han \textit{et al.} \cite{han2016estimation}.

Let $X \in \mathbb{R}^{n \times m}$ be a data set with $n$ observations and $m$ features. Let $I = \{1,\cdots,n\}$ and $J = \{1,\cdots,m\}$ be the index set of observations and features, respectively. For each $k \in J$, we build a regression model in order to explain feature $k$ using a subset of variables in $J^k$. In other words, we set feature $k$ as the response variable and a sparse subset of $J^k$ as explanatory variables of the regression model for variable $k$. In order to obtain a subset of $J^k$, the LASSO penalty function is added. Considering regression models for all $k \in J$ together, the problem can be represented on a graph. Each feature is a node in the graph, and the directed arc from node $j$ to node $k$ represents explanatory and response variable relationship between node $j$ and $k$. The goal is to minimize the sum of penalized SSE over all regression models for $k \in J$, while the selected arcs do not create a cycle.

Let $\beta_{jk}$, $j \in J^k$, $k \in J$, be the coefficient of attribute $j$ for dependent variable $k$. Then the problem can be written as
\begin{equation}
\displaystyle \min_{\beta} \frac{1}{n} \sum_{k \in J} \sum_{i \in I} (x_{ik} - \sum_{j \in J^k} \beta_{jk} x_{ij})^2 + \lambda \sum_{k \in J} \sum_{j \in J^k} |\beta_{jk}| \quad s.t. \quad G(supp(\beta)) \in \mathcal{A}, \label{formualtion_regression_network}
\end{equation}
which follows the form of \eqref{def_opt_with_dag}. In Han \textit{et al.} \cite{han2016estimation}, individual weights are used for the penalty term, i.e., $\lambda \sum_{k \in J} \sum_{j \in J^k} w_{jk} |\beta_{jk}|$, however, in the computational experiment, we set all weights equal to 1 for simplicity.

Let $Z_{jk} = 1$ if attribute $j$ is used for dependent variable $k$ and $Z_{jk} = 0$ otherwise. Then, we can formulate \textsf{MIPto} for \eqref{formualtion_regression_network} as
\begin{subequations}
\label{MIP_regression_network}
\begin{align}
\min \quad & \frac{1}{n} \sum_{k \in J} \sum_{i \in I} (x_{ik} - \sum_{j \in J^k} \beta_{jk} x_{ij})^2 +\lambda \sum_{k \in J} \sum_{j \in J^k} |\beta_{jk}| \label{MIP_regression_network_a}\\
s.t.\quad & |\beta_{jk}| \leq M Z_{jk}, & & j \in J^k, k \in J, \label{MIP_regression_network_supp}\\
& Z_{jk} - m Z_{kj} \leq \sum_{r \in J} r ( O_{kr} - O_{jr}), && j \in J^k, k \in J, \\
& Z_{jk} + Z_{kj} \leq 1, && j \in J^k, k \in J, \\
& \sum_{q \in J} O_{kq} = 1, & & k \in J, \\
& \sum_{k \in J} O_{kq} = 1, & & q \in J, \\
& \textstyle Z,O \in \{0,1\}^{m \times m},\\
& \beta_{jk} \mbox{ not restricted}, & & j \in J^k \cup \{0\}, k \in J, \label{MIP_regression_network_h}
\end{align}
\end{subequations}
where $M$ is a large constant. Note that \eqref{MIP_regression_network_supp} is the linear constraint corresponding to $Z = supp(Y)$ in \eqref{def_acyclic_dag_compact}. Similarly, \eqref{MIP_regression_network_a}, \eqref{MIP_regression_network_supp}, \eqref{mip_acyclic_baharev_triangle_c} - \eqref{mip_acyclic_baharev_triangle_e} and \eqref{MIP_regression_network_h} can be used to formulate \textsf{MIPin} for \eqref{formualtion_regression_network}. For \textsf{MIPcp}, \eqref{MIP_regression_network_a}, \eqref{MIP_regression_network_supp}, \eqref{MIP_regression_network_h}, and the constraints in \eqref{def_opt_with_dag_cuttingplane} can be used for \eqref{formualtion_regression_network}.

Note that $M$ in \eqref{MIP_regression_network_supp} plays an important role in computational efficiency and optimality. If $M$ is too small, the MIP model cannot guarantee optimality. If $M$ is too large, the solution time can be as large as enumeration. The algorithm for getting a valid value for $M$ in Park and Klabjan \cite{ParkKlabjanMIPreg} can be used. However, the valid value of big $M$ for multiple linear regression is often too large \cite{ParkKlabjanMIPreg}. For \eqref{MIP_regression_network}, we observed that a simple heuristic presented in Section \ref{sec_experiment} works well.

In each iteration of \textsf{IR} (Algorithm \ref{algo_IR}) and \textsf{GD} (Algorithm \ref{algo_grad_desc}), we are given topological order $\bar{\pi}$ and matrix $\bar{R} = adj(\bar{\pi})$. Let $S^k = \{j \in J^k | \bar{R}_{jk} = 1 \}$ be the set of selected candidate arcs for dependent variable $k$. Given fixed $\bar{R}$, \eqref{formualtion_regression_network} is separable into $m$ LASSO linear regression problems

\begin{equation}
\label{formualtion_regression_network_k_fixZ}
\min \quad \frac{1}{n} \sum_{i \in I} (x_{ik} - \sum_{j \in S^k} \beta_{jk} x_{ij})^2 + \lambda \sum_{j \in S^k} |\beta_{jk}|, \quad k \in J.
\end{equation}

\section{Computational Experiment}
\label{sec_experiment}

For all computational experiments, a server with two Xeon 2.70GHz CPUs and 24GB RAM is used.  Although there are many papers studying Bayesian network learning with various error measures and penalties, here we focus on minimizing the LASSO type objective (SSE and penalty) and we picked one of the latest paper of Han \textit{et al.} \cite{han2016estimation} with the same objective function as benchmark.

The MIP models \textsf{MIPcp}, \textsf{MIPin}, and \textsf{MIPto} are implemented with CPLEX 12.6 in C\#. For \textsf{MIPcp}, instead of implementing the original cutting plane algorithm, we use CPLEX Lazy Callback, which is similar to the cutting plane algorithm. Instead of solving \eqref{def_opt_with_dag_cuttingplane_iter} to optimally from scratch in each iteration, we solve \eqref{def_opt_with_dag_cuttingplane} with Lazy Callback, which allows updating (adding) constraints (cycle prevention constraints) in the process of the branch and bound algorithm whenever an integer solution with cycles is found. Given a solution with the cycles, we detect all cycles and add cycle prevention constraints for the detected cycles. 

For \textsf{MIPcp}, \textsf{MIPin}, and \textsf{MIPto}, we set big $M$ as follows. Given $\lambda$, we solve \eqref{formualtion_regression_network} without acyclic constraints. Hence, we are allowed to use all arcs in $J^k$ for each model $k \in J$. Then, we obtain the estimated upper bound for big $M$ by
\begin{equation}
\label{eqn_bigM}
M = 2 \Big( \max_{j \in J^k, k \in J} |\beta_{jk}| \Big).
\end{equation}
We observed that the above formula gives large enough big $M$ for all cases in the following experiment. In Appendix \ref{Appendix_max_coef}, we present comparison of regression coefficients of implanted network (DAG) with big M values by \eqref{eqn_bigM}. The result shows that the big M value in \eqref{eqn_bigM} is always valid for all cases considered.

We compare our algorithms and models with the algorithm in Han \textit{et al.} \cite{han2016estimation}, which we denote as \textsf{DIST} here. Their algorithm starts with neighborhood selection (NS), which filters unattractive arcs and removes them from consideration. The procedure is specifically developed for high dimensional variable selection when $m$ is much larger than $n$. In our experiment, many instances considered are not high dimensional and some have dense solutions. Further, by filtering arcs, there exists a probability that an arc in the optimal solution can be removed. Hence, we deactivated the neighborhood selection step of their original algorithm, where the R script of the original algorithm is available on the journal website.

For \textsf{GD} and \textsf{IR}, the algorithms are written in R \cite{Rstat}. We use \textsf{glmnet} package \cite{glmnet} function \textsf{glmnet} for solving LASSO linear regression problems in \eqref{formualtion_regression_network_k_fixZ}. For \textsf{IR}, we use parameters $\alpha = 0.01$, $t^* = 10$, $\nu_{lb} = 0.8$, and $\nu_{ub} = 1.2$. For \textsf{GD}, we use parameters $\alpha = 0.01$, $t_1^* = 10$, and $t_2^* = 5$. Because both \textsf{GD} and \textsf{IR} start with a random solution, they perform different with different random solutions. Further, since we observe that the execution time of \textsf{GD} and \textsf{IR} are much faster than \textsf{DIST}, we decided to run \textsf{GD} and \textsf{IR} with 10 different random seeds and report the best solution. To emphasize the number of different random seeds for \textsf{GD} and \textsf{IR}, we use the notation \textsf{GD10} and \textsf{IR10} in the rest of the section.

We first test all algorithms with synthetic instances generated using R package \textsf{pcalg} \cite{pcalgpackage}. Function \textsf{randomDAG} is used to generate a DAG and function \textsf{rmvDAG} is used to generate multivariate data with the standard normal error distribution. First, a DAG is generated by \textsf{randomDAG} function. Next, the generated DAG and random coefficients are used to create each column (with standard normal error added) by \textsf{rmvDAG} function which uses linear regression as the underlying model. After obtaining the data matrix from the package, we standardize each column to have zero mean with standard deviation equal to one. The DAG used to generate the multivariate data is considered as the true structure or true arc set while it may not be the optimal solution for the score function. The random instances are generated for various parameters described in the following.
\begin{enumerate}[noitemsep]
\item[] $m$: number of features (nodes)
\item[] $n$: number of observations
\item[] $s$: expected number of true arcs per node
\item[] $d$: expected density of the adjacency matrix of the true arcs
\end{enumerate}
By changing the ranges of the above parameters, three classes of random instances are generated.
\begin{enumerate}[noitemsep]
\item[] \textsf{Sparse data sets}: The expected total number of true arcs is controlled by $s$ and most of the instances have a sparse true arc set. We use $n \in \{ 100,200,300\}$, $m \in \{ 20,30,40,50\}$, and $s \in \{1,2,3\}$ to generate 10 instances for each $(n,m,s)$ triplet. This yields a total of 360 random instances.  \vspace{0.1cm}
\item[] \textsf{Dense data sets}: The expected total number of true arcs is controlled by $d$ and most of the instances have a dense true arc set compared with the sparse data sets. We use $n \in \{ 100,200,300\}$, $m \in \{ 20,30,40,50\}$, and $d \in \{0.1,0.2,0.3\}$ to generate 10 instances for each $(n,m,d)$ triplet, and thus we have a total of 360 random instances. 
\item[] \textsf{High dimensional data sets}: The instances are high dimensional ($m \geq n$) and very sparse. The expected total number of true arcs is controlled by $s$. We use $n = 100$ and $m \in \{100,150,200\}$, and $s \in \{0.5,1,1.5\}$ to generate 10 instances for each $(m,s)$ pair, which yields a total of 90 random instances.
\end{enumerate}

We use four $\lambda$ values differently defined for each data set in order to cover the expected number of arcs with the four $\lambda$ values. For each sparse instance, we solve \eqref{formualtion_regression_network} with $\lambda \in \{1,0.5,0.1,0.05\}$. For dense data sets, a wide range of $\lambda$ values are needed to obtain selected arc sets that have similar cardinalities with the true arc sets. Hence, for each dense instance, instead of fixed values over all instances in the set, we use $\lambda$ values based on expected density $d$: $\lambda = \lambda_0 \cdot 10^{-(10 \cdot d - 1)}$, where $\lambda_0 \in \{1,0.1,0.01,0.001\}$. For each high dimensional instance, we use $\lambda \in \{1,0.8,0.6,0.4\}$. Observe that the expected densities of the adjacency matrices vary across the three data sets. The sparse instances have expected densities between 0.02 and 0.15, the dense instances have expected densities between 0.1 and 0.3, and the high dimensional instances have expected densities between 0.002 and 0.015. Hence, different ranges of $\lambda$ values are necessary.

For all of the results presented in this section, we present the average performance by $n,m,s,d$ and $\lambda$. For example, the result for $n = 100$ and $m=20$ are the averages of 120 and 90 instances, respectively. In all of the comparisons, we use the following metrics.
\begin{enumerate}[noitemsep]
\item[] $time$: computation time in seconds
\item[] $\delta_{\mbox{\begin{tiny}sol\end{tiny}}}$: relative gap (\%) from the best objective value among the compared algorithms or models. For example, if we compare the three MIP models, $\delta_{\mbox{\begin{tiny}sol\end{tiny}}}$ of an MIP model is the relative gap from the best of the three objective function values obtained by the MIP models.
\item[] $\|z\|_0$: number of arcs selected (number of nonzero regression coefficients $\beta_{jk}$'s)
\end{enumerate}

In comparing the performance metrics, we use plot matrices. In each Figure \ref{fg_exp_sparse}, \ref{fg_exp_dense}, \ref{fg_exp_highdim}, \ref{fg_exp_sparse_mip}, and \ref{fg_exp_mip_vs_algo}, multiple bar plots form a matrix. The rows of the plot matrix correspond to performance metrics and the columns stand for parameters used for result aggregation. For example, the left top plot in Figure \ref{fg_exp_sparse} shows execution times of the algorithms where the results are aggregated by $n$ (the number of observations), because the first row and first column of the plot matrix in Figure \ref{fg_exp_sparse} are for execution times and $n$, respectively.

In Section \ref{section_exp_algo}, we compare the performance of iterative algorithms \textsf{GD10} and \textsf{IR10} and the benchmark algorithm \textsf{DIST}. In Section \ref{section_exp_mip}, we compare the performance of MIP models \textsf{MIPcp}, \textsf{MIPin}, and \textsf{MIPto}. We also compare all models and algorithms with a subset of the synthetic instances in Section \ref{section_exp_mip}. Finally, in Section \ref{subsec_real}, we solve a popular real instance of Sachs \textit{et al.} \cite{Sachs523} in the literature.

\subsection{Comparison of Iterative Algorithms}
\label{section_exp_algo}

In this section, we compare the performance of \textsf{GD10}, \textsf{IR10}, and \textsf{DIST} by $time$, $\delta_{\mbox{\begin{tiny}sol\end{tiny}}}$, and $\|z\|_0$ for each of the three data sets. 

In Figure \ref{fg_exp_sparse}, the result for the sparse data sets is presented. The bar plot matrix presents the performance measures aggregated by $n,m,s$, and $\lambda$.

\begin{figure}[ht]
\center
  \includegraphics[scale=0.38]{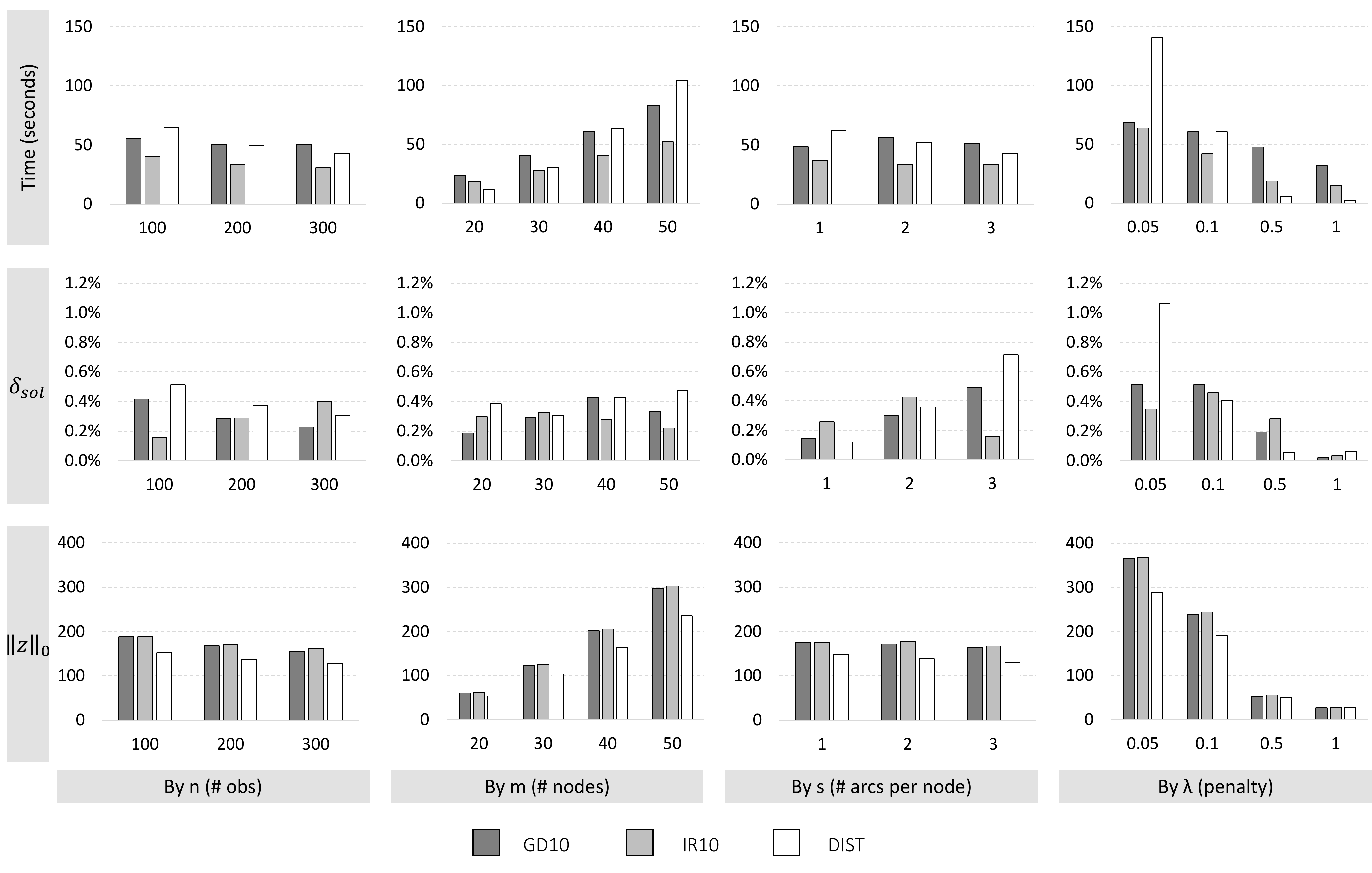}\\
  \caption{Performance of \textsf{GD10}, \textsf{IR10}, and \textsf{DIST} (sparse data)}
  \label{fg_exp_sparse}
\end{figure}

The computation time of all three algorithms increases in increasing $m$ and decreasing $\lambda$, where the computation time of \textsf{DIST} increases faster than the other two. The computation time of \textsf{DIST} is approximately 10 times faster than the \textsf{GD10} time when $\lambda = 1$, but 2 times slower when $\lambda = 0.05$. With increasing $n$, the computation times of all algorithms stay the same or decrease. This can be seen counter-intuitive because larger instances do not increase time. However, a larger number of observations can make predictions more accurate and could reduce search time for unattractive subsets. Especially, the computation time of \textsf{DIST} decreases in increasing $n$. We think this is because more observations give better local selection in the algorithm when adding and removing arcs. The number of selected arcs ($\|z\|_0$) of \textsf{GD10} and \textsf{IR10} is greater than \textsf{DIST} for all cases because the topological order based algorithms are capable of using the maximum number of arcs $\Big(\frac{m(m-1)}{2}\Big)$, while arc selection based algorithms, such as \textsf{DIST}, are struggling to select many arcs without violating acyclic constraints. In terms of the solution quality, all algorithms have $\delta_{\mbox{\begin{tiny}sol\end{tiny}}}$ less than 1.2\% and perform good. However, we observe several trends. As $\lambda$ decreases (required to select more arcs), \textsf{GD10} and \textsf{IR10} start to outperform. We also observe that, as the problem requires to select more arcs (increasing $m$, increasing $s$, and decreasing $\lambda$), \textsf{GD10} and \textsf{IR10} perform better. As $n$ increases, $\delta_{\mbox{\begin{tiny}sol\end{tiny}}}$ of \textsf{GD10} and \textsf{DIST} decrease, whereas $\delta_{\mbox{\begin{tiny}sol\end{tiny}}}$ of \textsf{IR10} increases.

The result for the dense data sets is presented in Figure \ref{fg_exp_dense}. The bar plot matrix presents the performance measures aggregated by $n,m,d,$ and $\lambda_0$. Recall that, for the dense data set, we solve \eqref{formualtion_regression_network} with $\lambda = \lambda_0 \cdot 10^{-(10 \cdot d - 1)}$ and $\lambda_0 \in \{1,0.1,0.01,0.001\}$. For simplicity of presenting the aggregated result, we use $\lambda_0$ in the plot matrix, while $\lambda$ values are used for actual computation. 

\begin{figure}[ht]
\center
  \includegraphics[scale=0.38]{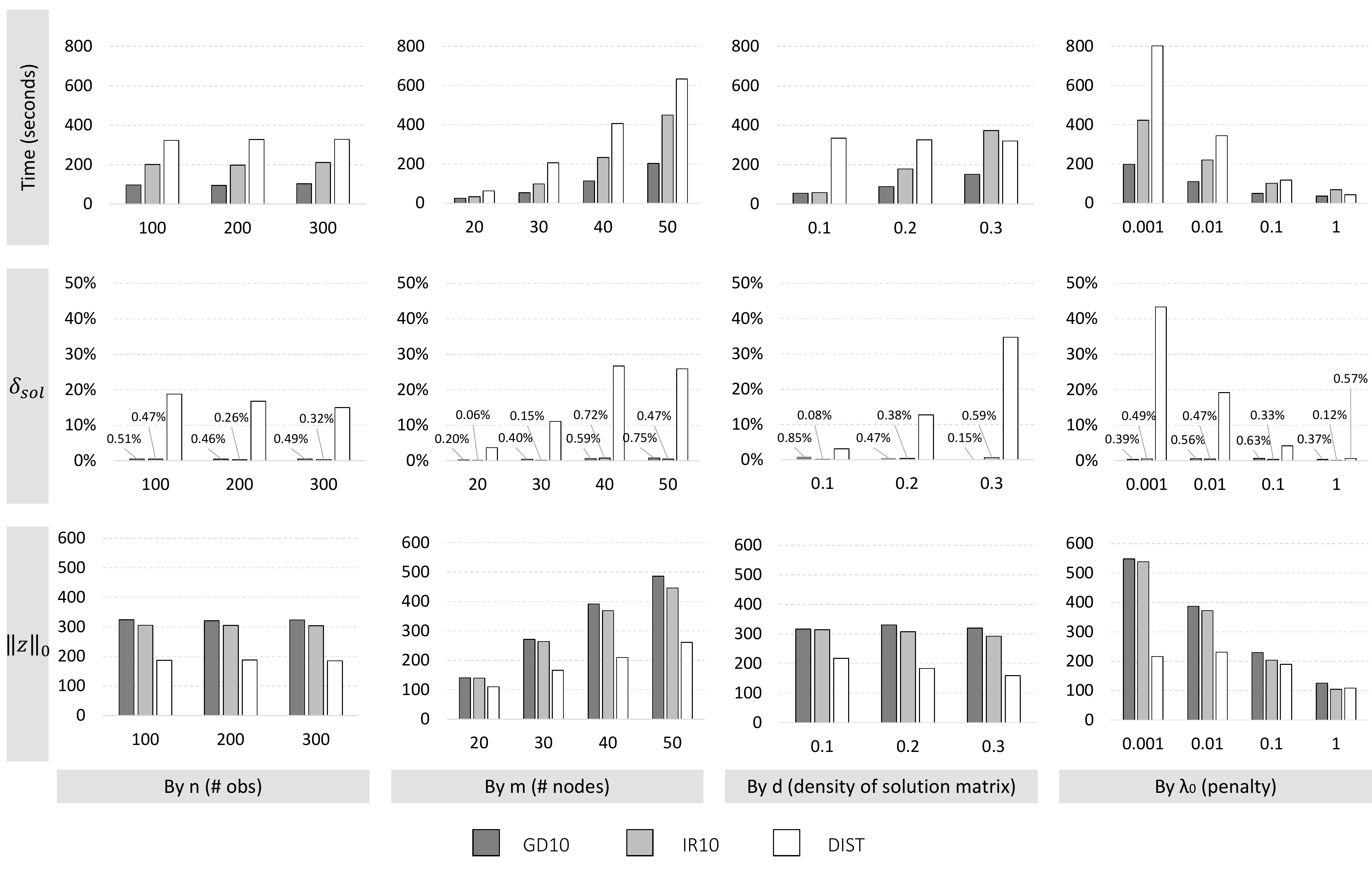}\\
  \caption{Performance of \textsf{GD10}, \textsf{IR10}, and \textsf{DIST} (dense data)}
  \label{fg_exp_dense}
\end{figure}

The computation time of all three algorithms again increases in increasing $m$ and decreasing $\lambda$, where the computation time of \textsf{DIST} increases faster than the other two. Compare to the result for the sparse data sets, the execution times are all larger for the dense data set. The number of selected arcs ($\|z\|_0$) of \textsf{GD10} and \textsf{IR10} is again greater than \textsf{DIST} for all cases, where $\|z\|_0$ is twice larger for \textsf{GD10} and \textsf{IR10} when $m$ or $d$ is large, or $\lambda$ is small. In terms of the solution quality, \textsf{GD10} and \textsf{IR10} outperform in most of the cases, while $\delta_{\mbox{\begin{tiny}sol\end{tiny}}}$ values of \textsf{DIST} increase fast in changing $m,d,$ and $\lambda$. The values of $\delta_{\mbox{\begin{tiny}sol\end{tiny}}}$ for all algorithms are larger than the sparse data sets result. \textsf{GD10} and \textsf{IR10} are better for most of the cases. In general, we again observe that \textsf{GD10} and \textsf{IR10} perform better when the problem requires to select more arcs.

The result for the high dimensional data sets is presented in Figure \ref{fg_exp_highdim}. The bar plot matrix presents the performance measures aggregated by $m,s,$ and $\lambda$, while $n$ is excluded from the matrix as we fixed $n$ to 100.

\begin{figure}[ht]
\center
  \includegraphics[scale=0.38]{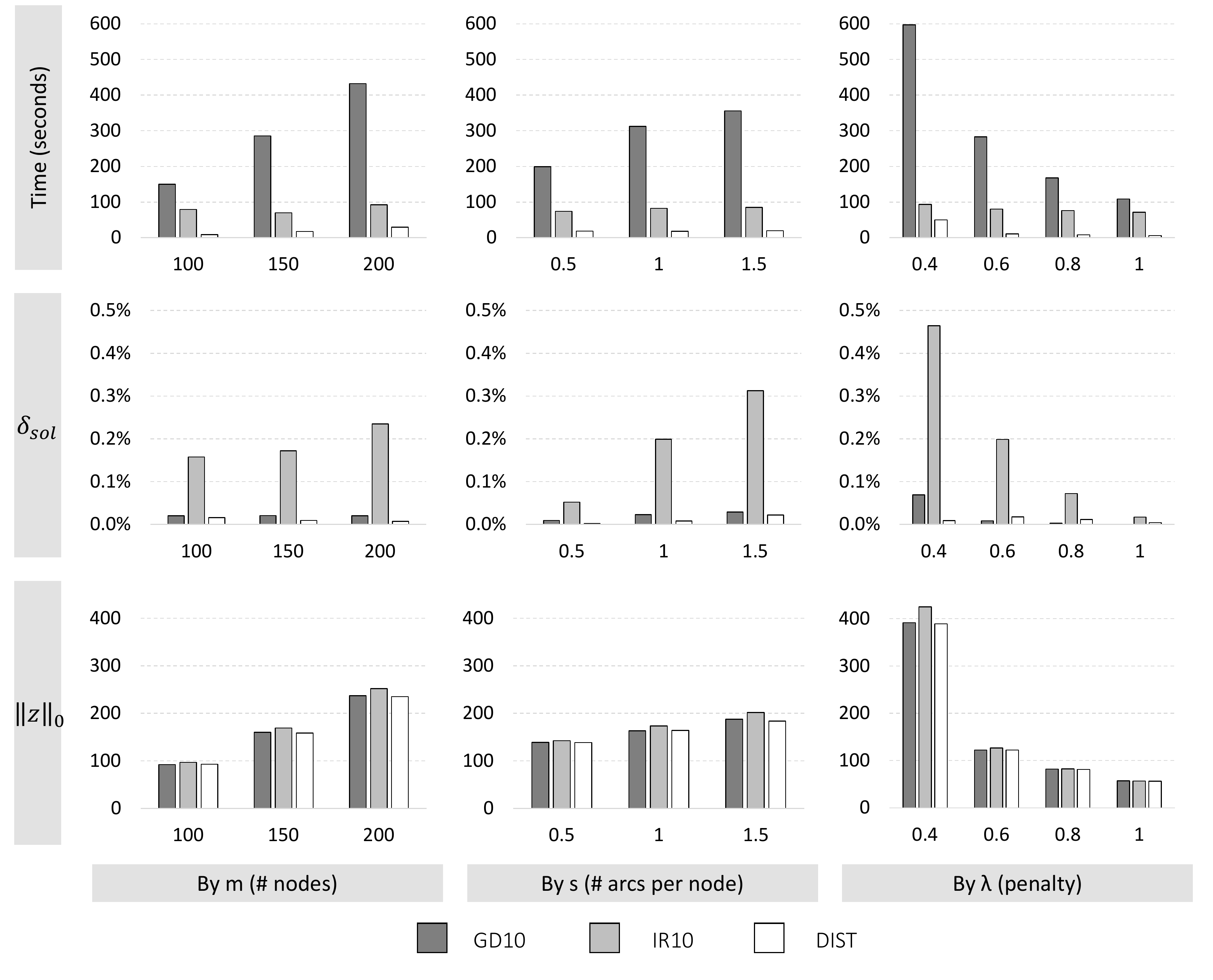}\\
  \caption{Performance of \textsf{GD10}, \textsf{IR10}, and \textsf{DIST} (high dimensional data)}
  \label{fg_exp_highdim}
\end{figure}

The computation time of all three algorithms again increases in increasing $m$ and decreasing $\lambda$. However, unlike the previous two sets, the computation times of \textsf{GD10} and \textsf{IR10} increase faster than \textsf{DIST}. This is due to the efficiency of the topological order based algorithms. When a very small portion of the arcs should be selected in the solution, topological orders are not informative. For example, consider a graph with three nodes $A,B,C$ and assume that only one arc, $(A,B)$, is selected due to a large penalty. For this case, three topological orders $A-B-C$, $A-C-B$, and $C-A-B$ can represent the selected arc. The third row of Figure \ref{fg_exp_highdim} shows that $\|z\|_0$ of the three algorithms are very similar, while \textsf{DIST} has much smaller values for the previous two data sets. This implies that arc based search by \textsf{DIST} does not have difficulties preventing cycles and the algorithm can decide whether to include arcs easier. The comparison of $\delta_{\mbox{\begin{tiny}sol\end{tiny}}}$ values also show that arc based search is competitive. Although all algorithms have $\delta_{\mbox{\begin{tiny}sol\end{tiny}}}$ values less than 0.5\%, we find clear evidence that the performance of \textsf{IR10}	decreases in increasing $m$ and $s$ and decreasing $\lambda$. Although the $\delta_{\mbox{\begin{tiny}sol\end{tiny}}}$ values of \textsf{GD10} and \textsf{DIST} are similar, considering the fast computing time of \textsf{DIST}, we recommend to use \textsf{DIST} for very sparse high dimensional data.

In Figure \ref{fg_density_vs_gaps}, we present combined results of all three data sets by relating solution densities and $\delta_{\mbox{\begin{tiny}sol\end{tiny}}}$. Observe that, for each quadruplet of $n,m,s$(or $d$)$,\lambda$, we have results from 10 random instances for each algorithm. Value $\delta_{\mbox{\begin{tiny}sol\end{tiny}}}$ is the average of the 10 results for each quadruplet and for each algorithm, $AvgDen$ is the average density of the adjacency matrices of the 10 results and the three algorithms. In Figure \ref{fg_density_vs_gaps}, we present a scatter plot of $ln(1 + AvgDen)$ and $ln(1+100 \cdot \delta_{\mbox{\begin{tiny}sol\end{tiny}}})$. Each point in the plot is the average of 10 results by an algorithm and each algorithm has 324 points displayed \footnote{$324 = (3 \cdot 4 \cdot 3 \cdot 4) + (3 \cdot 4 \cdot 3 \cdot 4) + (1 \cdot 3 \cdot 3 \cdot 4)$, where the three terms are for the three data sets and each term is obtained by multiplying the number of parameters $n,m,s(d)$, and $\lambda$, respectively.}. The numbers in the parenthesis along the axes are the corresponding values of $\delta_{\mbox{\begin{tiny}sol\end{tiny}}}$ and $AvgDen$. In the plot, we first observe that the algorithms perform similarly when the solutions are sparse and the $\delta_{\mbox{\begin{tiny}sol\end{tiny}}}$ values have large variance when the solutions are dense. When the log transformed solution density is less than 2, the average $\delta_{\mbox{\begin{tiny}sol\end{tiny}}}$ values of \textsf{GD10}, \textsf{IR10}, and \textsf{DIST} are 0.06\%, 0.15\%, and 0.04\%, respectively. However, the solution quality of \textsf{DIST} drastically decreases as the solutions become denser. This makes sense because sparse solutions can be efficiently searched by arc-based search, while dense solutions are not easy to obtain by adding or removing arcs one by one. This also explains the relatively small and large $\delta_{\mbox{\begin{tiny}sol\end{tiny}}}$ values for dense and spares solutions, respectively, by the topological order based algorithms. Between \textsf{GD10} and \textsf{IR10}, we do not observe a big difference.
\begin{figure}[ht]
\center
  \includegraphics[scale=0.6]{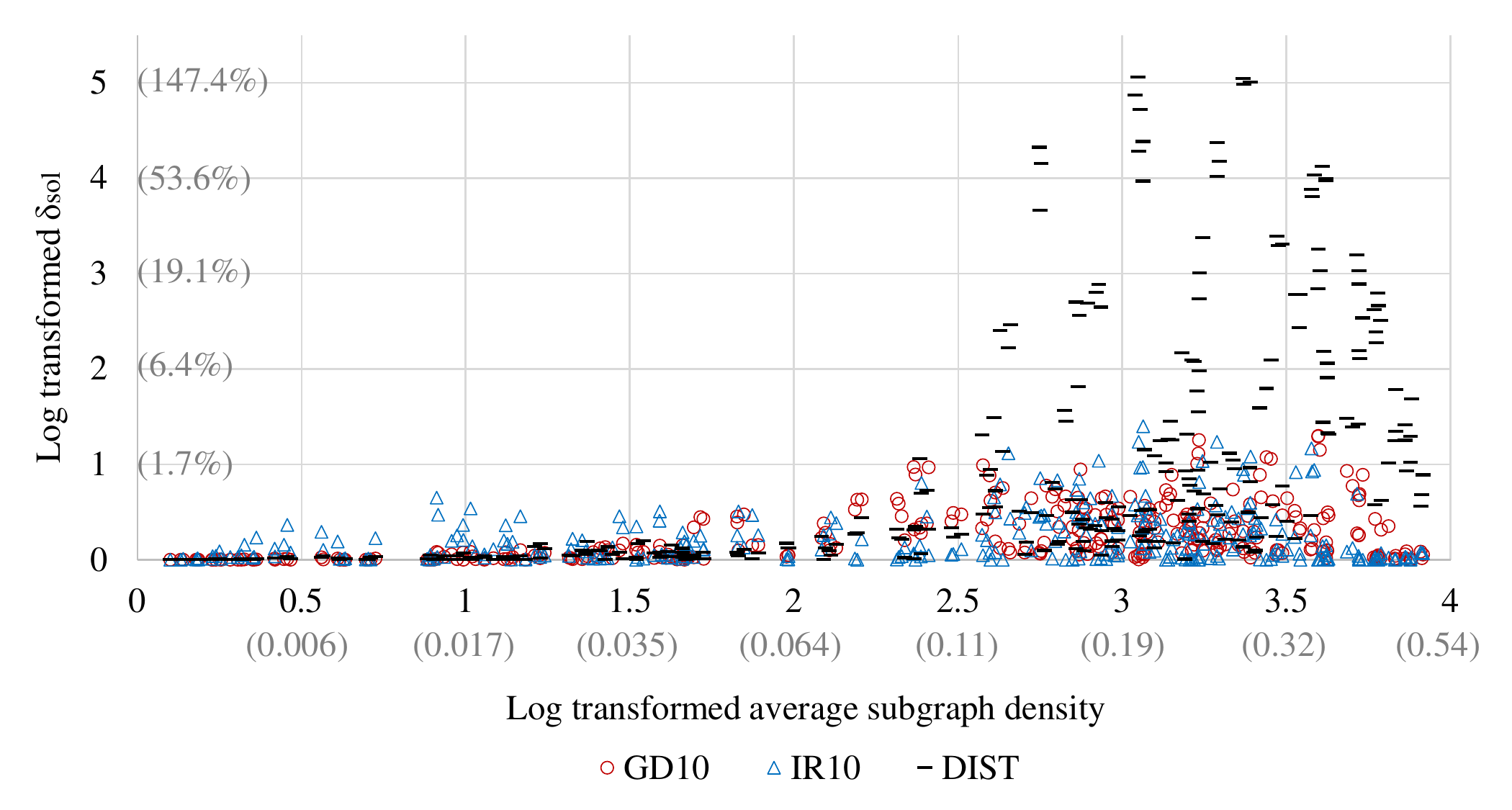}\\
      \caption{Scatter plot of $\delta_{sol}$ and average solution densities}
  \label{fg_density_vs_gaps}
\end{figure}

\subsection{Comparison of MIP Models}
\label{section_exp_mip}

In this section, we compare the performance of \textsf{MIPto}, \textsf{MIPin}, and \textsf{MIPcp} using $time$, $\delta_{\mbox{\begin{tiny}sol\end{tiny}}}$, and $\|z\|_0$ and the following additional metric.
\begin{enumerate}[noitemsep]
\item[] $\delta_{\mbox{\begin{tiny}IP\end{tiny}}}$: the optimality gap (\%) obtained by CPLEX within allowed $15 \cdot m$ seconds 
\end{enumerate}

Due to scalability issues of all models, we only use the sparse data with $m = 20,30,40$. We also limit $n = 100$. For all instances, we use the $15 \cdot m$ seconds time limit for CPLEX. For example, we have time limit of 300 seconds for instances with $m = 20$.

The result is presented in Figure \ref{fg_exp_sparse_mip}. Comparing the time of all models with the time limit for CPLEX, we observe that \textsf{MIPin} and \textsf{MIPcp} were able to terminate with optimality for several instances when $m = 20$ and $\lambda = 1$. This implies that \textsf{MIPin} and \textsf{MIPcp} are efficient when the problem is small and the number of selected arcs $\| z\|_0$ is small. However, in general, $\delta_{IP}$ values tend to be consistent with different models, while they increase in increasing $m$ and $s$ and in decreasing $\lambda$ for all three MIP models. The execution times of all models increase in increasing $m$ and $s$, and in decreasing $\lambda$. The same trend can be found for $\delta_{IP}$ for all models. By comparing $\delta_{\mbox{\begin{tiny}sol\end{tiny}}}$ values, we observe that  \textsf{MIPin} is best when $m = 20$ and 30. However, the performance of \textsf{MIPin} drops drastically as $m$ and $s$ increase and $\lambda$ decreases. Actually, \textsf{MIPin} fails to obtain a reasonably good solution within the time limit for several instances. This gives large $\delta_{\mbox{\begin{tiny}sol\end{tiny}}}$ values and increases the average. The $\delta_{\mbox{\begin{tiny}sol\end{tiny}}}$ values of \textsf{MIPto} are smaller than \textsf{MIPcp} when $m$ and $s$ are small, while \textsf{MIPcp} outperforms when $m = 40$ or $s = 3$.

\begin{figure}[ht]
\center
  \includegraphics[scale=0.4]{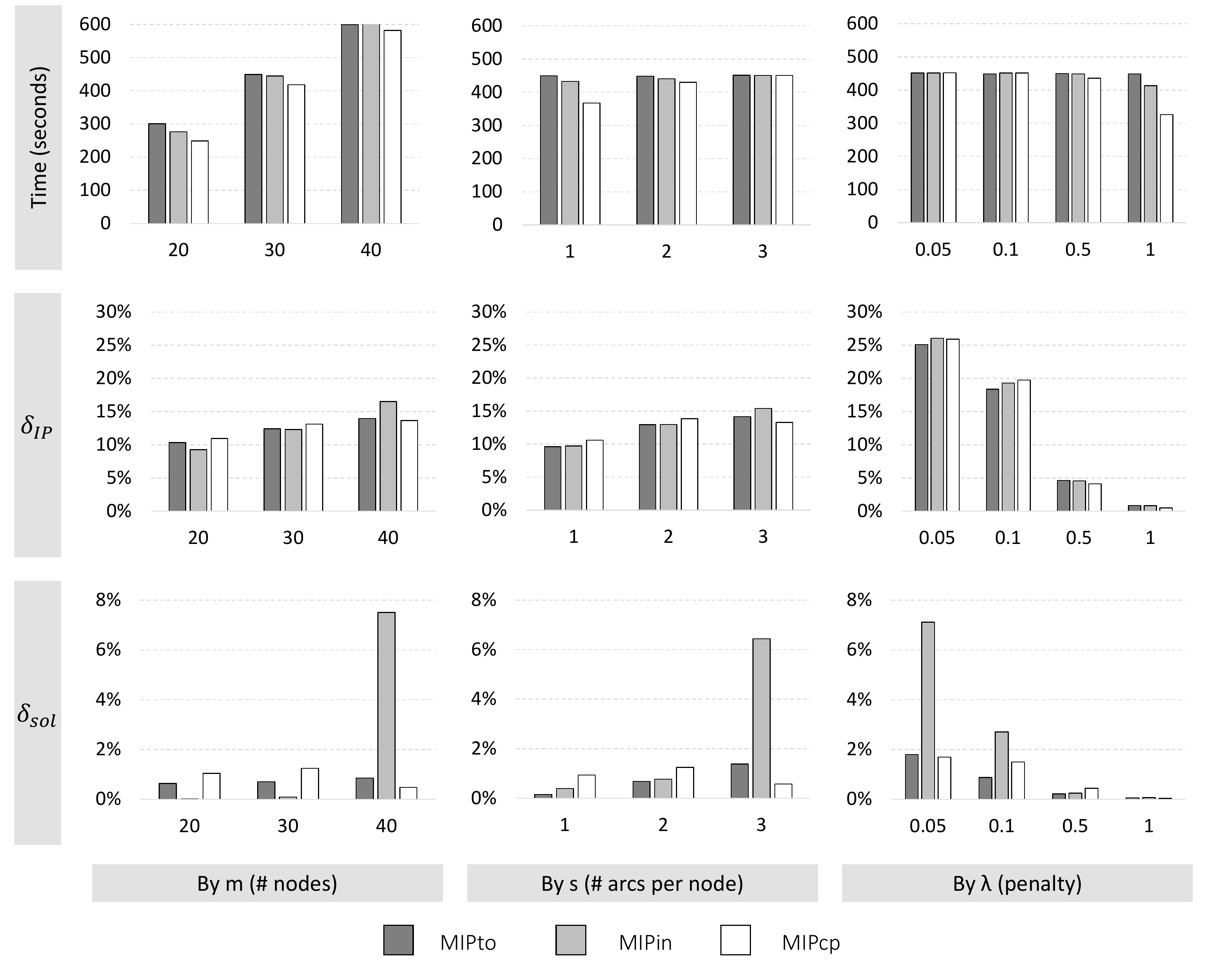}\\
      \caption{Performance of \textsf{MIPto}, \textsf{MIPin}, and \textsf{MIPcp} (sparse data with $n = 100$ and $m \in \{20,30,40\}$)}
  \label{fg_exp_sparse_mip}
\end{figure}

\subsection{Comparison of all MIP Models and Algorithms}
\label{section_exp_all}

In Figure \ref{fg_exp_mip_vs_algo}, we compare all models and algorithms for selected sparse instances with $n = 100$ and $m \in \{20,30,40\}$, which were used to test MIP models. In the plot matrix, we show the average computation time and $\delta_{\mbox{\begin{tiny}sol\end{tiny}}}$ (gap from the best objective value among the six models and algorithms) by $m,s,$ and $\lambda$. Note that $\delta_{\mbox{\begin{tiny}sol\end{tiny}}}$ values of a few \textsf{MIPin} results are not fully displayed in the bar plots due to their large values. Instead, the actual numbers are displayed next to the corresponding bar. The result shows that MIP models spent more time while the solution qualities are inferior in general. The values of $\delta_{\mbox{\begin{tiny}sol\end{tiny}}}$ for the MIP models are competitive only when $\lambda$ is large, which requires sparse solution. However, even for this case, MIP models spend longer time than the algorithms. Hence, ignoring the benefit of knowing and guaranteeing optimality by the MIP models, we conclude that the algorithms perform better for all cases.

\begin{figure}[ht]
\center
  \includegraphics[scale=0.38]{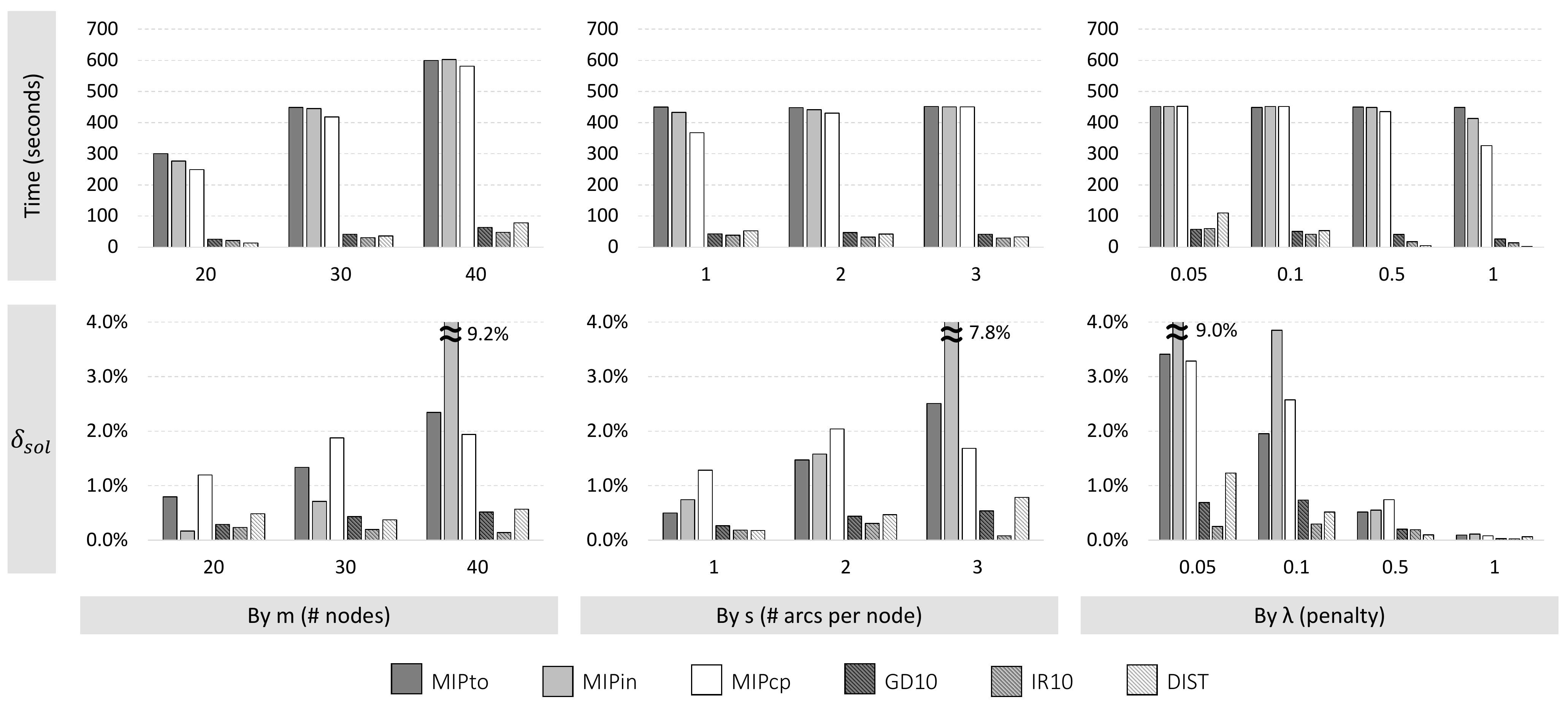}\\
      \caption{Performance of all models and algorithms (sparse data with $n = 100$ and $m \in \{20,30,40\}$)}
  \label{fg_exp_mip_vs_algo}
\end{figure}

The primary reason for inferior performance of the MIP models is the difficulty of solving integer programming problems. Further, all of the MIP models have at least $O(m^2)$ binary variables and $O(m^2)$ constraints and the problem complexity grows fast. Finally, large values of big M in \eqref{MIP_regression_network_supp} make the problem even more difficult. Due to non-tight values of big M, fathoming does not happen frequently in the branch and bound procedure. Hence, at least for sparse Gaussian Bayesian network learning, the MIP models may not be the best option unless other complicated constraints, which cannot be easily dealt with iterative algorithms, are needed.

\subsection{Real Data Example}
\label{subsec_real}

In this section, we study the flow cytometry data set from Sachs \textit{et al.} \cite{Sachs523} by solving \eqref{formualtion_regression_network}. The data set has been studied in many works including Friedman \textit{et al.} \cite{Friedman01072008}, Shojaie and Michailidis \cite{Shojaie01092010}, Fu and Zhou \cite{fu2013learning}, and Aragam and Zhou \cite{aragam2015concave}. The data set is often used as a benchmark as the casual relationships (underlying DAG) are known. It has $n = 7466$ cells obtained from multiple experiments with $m = 11$ measurements. The known structure contains 20 arcs. For the experiment, we standardize each column to have zero mean with standard deviation equal to one.

In Table \ref{table_exp_real}, we compare the performance of the three algorithms \textsf{GD10}, \textsf{IR10}, and \textsf{DIST} for various values of $\lambda$. The MIP models are excluded due to scalability issue \footnote{Although $m$ is small, with $n=$7466 observations, the MIP models have at least $7466 \cdot 11 \cdot 2$ = 164,252 continuous variables and 7466 $\cdot$ 11 = 82,126 constraints just for the residual terms. Combined with the complexity increment due to the binary variables and acyclicity constraints, it was not feasible to obtain a reasonable solution by any of the MIP models.}. We compare the previously used performance measures execution time, solution cardinalities ($\|z\|_0$), and solution quality ($\delta_{\mbox{\begin{tiny}sol\end{tiny}}}$). In addition, we also compare sensitivities (true positive ratio) of the solutions. By comparing with the known structure with 20 arcs, we calculate directed true positive (dTP) and undirected true positive (uTP). If arc $(j,k)$ is in the known structure, dTP counts only if arc ($j,k$) is in the algorithm's solution, whereas uTP counts either of arcs $(j,k)$ or $(k,j)$ is in the algorithm's solution. 

The solution times of the three algorithms are all within a few seconds, as $m$ is small. Also, the solution cardinalities ($\|z\|_0$) are similar. The best $\delta_{\mbox{\begin{tiny}sol\end{tiny}}}$ value among the three algorithms in each row is in boldface. We observe that \textsf{GD10} provides the best solution (smallest $\delta_{\mbox{\begin{tiny}sol\end{tiny}}}$) in most cases and \textsf{IR10} is the second best. The $\delta_{\mbox{\begin{tiny}sol\end{tiny}}}$ values of \textsf{DIST} increase as $\lambda$ increases. This is consistent with the findings in Section \ref{section_exp_algo}. Note that the density of the underlying structure is $20/11^2 = 0.165$, which is dense. This explains the good performance of \textsf{GD10} and \textsf{IR10}. On the other hand, even though \textsf{GD10} provides the best objective function values for most of the cases, the dTP and uTP values of \textsf{GD10} are not always the best. The highest value among the three algorithms in each row is in boldface. While $\delta_{\mbox{\begin{tiny}sol\end{tiny}}}$ values of \textsf{DIST} are the largest among the three algorithms, dTP and uTP values are the best in some cases. When $\lambda$ is small, \textsf{DIST} tends to have higher dTP and uTP. However, as $\lambda$ increases, \textsf{GD10} gives the best dTP and uTP values. To further improve the prediction power, we may need weighting features or observations.
\begin{table}[htbp]
  \centering
\scriptsize
\setlength{\tabcolsep}{5pt}
    \begin{tabular}{|c|rrrrr|rrrrr|rrrrr|}
    \hline
          & \multicolumn{5}{c|}{\textsf{GD10}}      & \multicolumn{5}{c|}{\textsf{IR10} }     & \multicolumn{5}{c|}{\textsf{DIST} } \\ \hline
    $\lambda$   & $time$  & $\|z\|_0$     & $\delta_{\mbox{\begin{tiny}sol\end{tiny}}}$ & dTP & uTP     & $time$  & $\|z\|_0$    & $\delta_{\mbox{\begin{tiny}sol\end{tiny}}}$  & dTP & uTP & $time$  & $\|z\|_0$     & $\delta_{\mbox{\begin{tiny}sol\end{tiny}}}$ & dTP & uTP \\ \hline
    0.5   & 14.6  & 9     & \textbf{0.00\%} & 0.22  & \textbf{0.56}  & 10.8  & 9     & 0.22\% & 0.22  & \textbf{0.56}  & 5.8   & 9     & 0.38\% & \textbf{0.56}  & \textbf{0.56} \\
    0.45  & 13.7  & 11    & \textbf{0.00\%} & 0.27  & 0.55  & 10.9  & 13    & 0.30\% & 0.15  & 0.46  & 8.5   & 9     & 0.49\% & \textbf{0.56}  & \textbf{0.56} \\
    0.4   & 14.3  & 11    & \textbf{0.00\%} & 0.27  & \textbf{0.55}  & 14.8  & 13    & 0.36\% & 0.15  & 0.46  & 7.0   & 13    & 0.61\% & \textbf{0.38}  & 0.46 \\
    0.35  & 16.0  & 13    & \textbf{0.00\%} & 0.23  & \textbf{0.54}  & 12.4  & 17    & 0.42\% & 0.12  & 0.41  & 8.6   & 15    & 0.73\% & \textbf{0.33}  & 0.47 \\
    0.3   & 15.1  & 13    & \textbf{0.00\%} & 0.23  & 0.46  & 14.3  & 17    & 0.21\% & 0.24  & 0.41  & 8.0   & 17    & 0.84\% & \textbf{0.29}  & \textbf{0.47} \\
    0.25  & 14.4  & 16    & \textbf{0.00\%} & \textbf{0.38}  & \textbf{0.56}  & 14.8  & 20    & 0.22\% & 0.25  & 0.50  & 6.7   & 20    & 0.94\% & 0.30  & 0.50 \\
    0.2   & 16.0  & 16    & \textbf{0.00\%} & 0.31  & \textbf{0.56}  & 14.6  & 22    & 0.31\% & 0.32  & 0.55  & 6.8   & 21    & 1.12\% & \textbf{0.33}  & 0.52 \\
    0.15  & 16.4  & 21    & \textbf{0.00\%} & \textbf{0.33}  & \textbf{0.57}  & 15.6  & 25    & 0.32\% & 0.32  & 0.52  & 9.5   & 23    & 1.26\% & 0.30  & 0.52 \\
    0.1   & 17.4  & 24    & \textbf{0.00\%} & 0.25  & \textbf{0.50}  & 11.0  & 28    & 0.26\% & 0.07  & 0.43  & 9.1   & 27    & 1.45\% & \textbf{0.26}  & 0.44 \\
    0.05  & 17.7  & 28    & 0.32\% & \textbf{0.39}  & \textbf{0.50}  & 11.5  & 30    & \textbf{0.00\%} & 0.10  & 0.47  & 10.1  & 32    & 1.87\% & 0.22  & 0.41 \\ \hline
    \end{tabular}%
      \caption{Performance on the real data set from Sachs \textit{et al.} \cite{Sachs523}}
  \label{table_exp_real}%
\end{table}%

From Table \ref{table_exp_real}, we observe that  a slight change in solution quality $\delta_{\mbox{\begin{tiny}sol\end{tiny}}}$ affects the final selection of DAG significantly. Also, the best objective function value does not necessarily give the highest true positive value since the $L_1$-norm penalized least square \eqref{formualtion_regression_network} may not be the best score function. In Figure \ref{fg_sachs}, we present the graphs of known casual interactions (a), estimated subgraph by \textsf{GD10} (b), \textsf{IR10} (c), and \textsf{DIST} (d). All graphs are obtained with $\lambda = 0.25$, but the numbers of arcs are different (see Table \ref{table_exp_real}). In fact, the difference between $\delta_{\mbox{\begin{tiny}sol\end{tiny}}}$ values of \textsf{IR10} and \textsf{DIST} is less than 1\% while the subgraphs have only 10 common arcs. 
\begin{figure}[ht]
\center
  \includegraphics[scale=0.35]{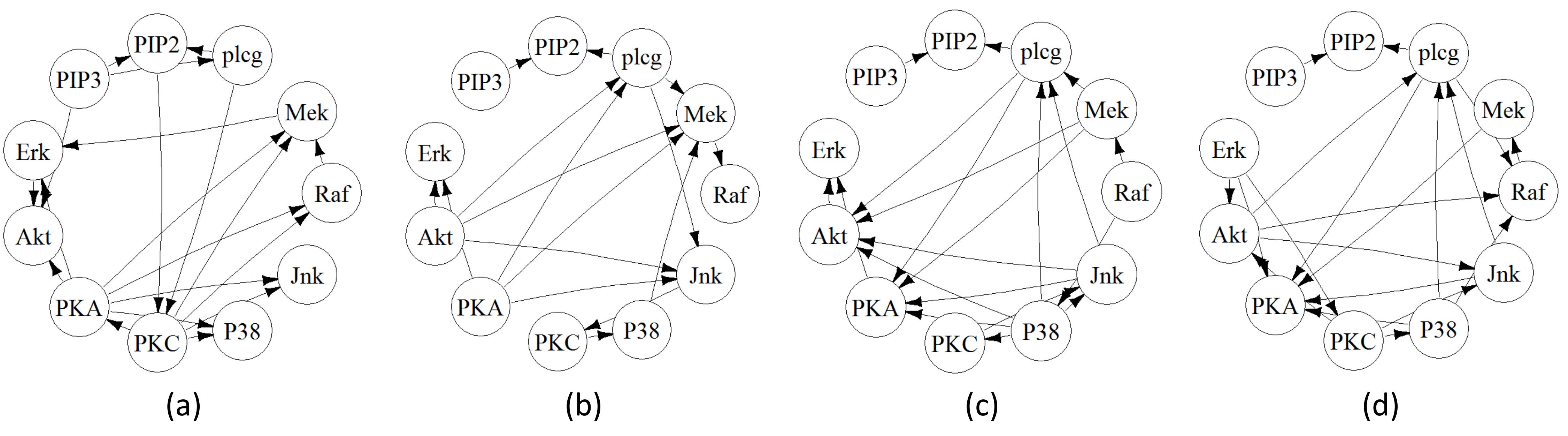}\\
      \caption{Known DAG (a) and estimated subgraphs with $\lambda = 0.25$ by \textsf{GD10} (b), \textsf{IR10} (c), and \textsf{DIST} (d)}
  \label{fg_sachs}
\end{figure}

\section{Conclusion}

We propose an MIP model and iterative algorithms based on topological order. Although the computational experiment is conducted for Gaussian Bayesian network learning, all the proposed model and algorithms are applicable for problems following the form in \eqref{def_opt_with_dag}. While many MIP models and algorithms are designed based on arc search, using topological order provides some advantages that improve solution quality and algorithm efficiency.

\begin{enumerate}
\item DAG constraints (acyclicity constraints) are automatically satisfied when arcs from high order nodes to low order nodes are used. 
\item In applying the concept for MIP, a lower number of constraints is needed $(O(m^2))$ whereas arc based modeling can have exponentially many constraints in the worst case.
\item In applying the concept in designing iterative algorithms, one of the biggest merits is the capability of utilizing the maximum number of arcs possible ($\frac{m(m-1)}{2}$), while arc based algorithms struggle with using all possible arcs.
\end{enumerate}

The proposed MIP model has the smallest number of constraints while the number of binary variables is in the same order with the already known MIP models. It performs as good as a cutting plane algorithm. The proposed iterative algorithms get the biggest benefit when the solution matrix is dense. The result presented in Section \ref{section_exp_algo} clearly indicates that the topological order based algorithms outperform when the density of the resulting solution is high. On the other hand, arc-based search algorithms, represented by DIST in our experiment, can be efficient when the desired solutions are very sparse.

Comparing all models and algorithms used in the experiment, we observe that the MIP models are not competitive or scale well compared to the heuristic algorithms except for small instances. The experiment shows that the solution times of the MIP models are significantly affected by the number of nodes $m$. For Gaussian Bayesian network learning, we observe that large $n$ could also decrease the MIP model efficiency even when $m$ is small (Section \ref{subsec_real}). Among the iterative algorithms, we recommend \textsf{DIST} for very sparse high dimensional data and \textsf{GD10} and \textsf{IR10} for dense data. Among the two topological order based algorithms, \textsf{GD10} performs slightly better and is more stable.

\bibliographystyle{abbrv}


\appendix
\section*{APPENDIX}

\section{Greedy Algorithm for Projection Problem}
\label{appendix_greedy_algo}

In this section, we present the detail derivations and proofs of Algorithm \ref{algo_greedy} (\textit{greedy}). The algorithm sequentially determines topological order by optimizing the projection problem given an already fixed order up to the iteration point. Solving the projection problem is $\mathcal{NP}$-complete and our algorithm may not give a global optimal solution. However, the result of this section shows that Algorithm \ref{algo_greedy} gives an optimal choice of the next node to have fixed order given pre-fixed orders.

We start by describing some properties of $Y^*$ in the following three lemmas. For the following lemmas, let $\pi_j^*$ represent the topological order of node $j$ defined by $Y^*$.
\begin{lemma}
\label{lemma_zero_or_u}
For any $Y^*_{jk}$, we must have either $Y_{jk}^* = 0$ or $Y_{jk}^* = U_{jk}^t$.
\end{lemma}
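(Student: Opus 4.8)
The plan is to exploit the separability of the projection objective together with the fact that the acyclicity constraint in \eqref{def_opt_unconstrained_acyclic} depends on $Y$ only through its support. Writing the objective as $\|Y - U^t\|_2^2 = \sum_{j \in J, k \in J} (Y_{jk} - U^t_{jk})^2$, each entry $Y_{jk}$ contributes an independent term, so I would argue by coordinate-wise optimality: fix every entry of the optimal $Y^*$ except the single entry $(j,k)$, and examine the best feasible value of $Y_{jk}$. Since $Y^*$ is globally optimal, it must in particular minimize the objective over this one free coordinate, which reduces the problem to understanding the feasible set of a single entry.

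The key structural observation is that, with all other entries frozen, the set of feasible values of $Y_{jk}$ takes one of only two forms. Let $S$ be the support of $Y^*$ on the remaining (frozen) entries; as a subgraph of the DAG $supp(Y^*)$, the graph $G(S)$ is acyclic. If adding arc $(j,k)$ to $G(S)$ keeps the graph acyclic, then $Y_{jk}$ may take any real value (a nonzero value yields the acyclic support $S \cup \{(j,k)\}$, and the value $0$ yields the acyclic support $S$), so the feasible set is all of $\mathbb{R}$. If instead arc $(j,k)$ closes a cycle with $G(S)$, then $supp(Y)_{jk}$ must be $0$, forcing $Y_{jk} = 0$, so the feasible set is the single point $\{0\}$. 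This dichotomy is well defined precisely because $S$ is fixed.

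In the first case the unique minimizer of the strictly convex function $(Y_{jk} - U^t_{jk})^2$ over $\mathbb{R}$ is $Y_{jk} = U^t_{jk}$, so optimality of $Y^*$ gives $Y^*_{jk} = U^t_{jk}$. In the second case feasibility alone forces $Y^*_{jk} = 0$. Hence every entry of $Y^*$ equals either $U^t_{jk}$ or $0$, which is the claim.

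The main subtlety to handle carefully is the feasibility bookkeeping in the coordinate change: one must confirm that perturbing $Y^*_{jk}$ to $U^t_{jk}$, when this entry is free to be nonzero, does not introduce a cycle, and that setting it to $0$ is always feasible. Both follow once the dichotomy above is established, since in the free case the support with arc $(j,k)$ present is exactly $S \cup \{(j,k)\}$, which was assumed acyclic, while reducing an entry to $0$ only deletes an arc and a subgraph of a DAG is a DAG. Notably, no step requires reasoning about the global optimum of the $\mathcal{NP}$-hard projection; only single-coordinate improvements are used, so the argument is elementary once the support-only nature of the constraint is isolated.
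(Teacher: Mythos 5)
Your proof is correct and rests on the same single-coordinate perturbation idea as the paper's: the objective is separable and the acyclicity constraint depends on $Y$ only through $supp(Y)$, so any entry that is free to be nonzero must equal $U^t_{jk}$ at optimality. The paper phrases this as a contradiction (if $Y^*_{qr}\neq 0$ and $Y^*_{qr}\neq U^t_{qr}$, resetting that entry to $U^t_{qr}$ keeps the support non-increasing, hence feasible, and strictly improves the objective), whereas you additionally classify the per-coordinate feasible set as $\mathbb{R}$ or $\{0\}$ --- slightly more than the lemma requires, but sound.
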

\begin{proof}
For a contradiction, let us assume that there exist indices $q$ and $r$ such that $Y_{qr}^* \neq 0$ and $Y_{qr}^* \neq U_{qr}^t$. Let us create a new solution $\bar{Y}$ such that $\bar{Y} = Y^*$ except $\bar{Y}_{qr} = U_{qr}^t$. Note that $\bar{Y}$ is a feasible solution to \eqref{def_opt_unconstrained_acyclic} because $supp(\bar{Y}) \leq supp(Y^*)$ element-wise since $Y_{qr}^* \neq 0$. Further, we have $\| Y^* - U^t\|_2 > \| \bar{Y} - U^t\|_2$ because $(Y_{qr}^*- U_{qr}^t)^2 > 0 = (\bar{Y}_{qr} - U_{qr}^t)^2$ and $\bar{Y} = Y^*$ except $Y_{qr}^* \neq \bar{Y}_{qr}$. This contradicts optimality of $Y^*$.
\end{proof}
Note that Lemma \ref{lemma_zero_or_u} implies that solving \eqref{def_opt_unconstrained_acyclic} is essentially choosing between 0 and $U_{jk}^*$ for $Y_{jk}^*$. This selection is also based on the following property.

\begin{lemma}
\label{lemma_toporder_u}
If $\pi_j^* > \pi_k^*$, then $Y_{jk}^* = U_{jk}^t$.
\end{lemma}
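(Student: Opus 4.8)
The plan is to reuse the exchange (swapping) argument from the proof of Lemma \ref{lemma_zero_or_u}, but now adding an arc rather than deleting one. Suppose for contradiction that $\pi_j^* > \pi_k^*$ yet $Y_{jk}^* \neq U_{jk}^t$. By Lemma \ref{lemma_zero_or_u} the only remaining possibility is $Y_{jk}^* = 0$, and for this value to differ from $U_{jk}^t$ we must have $U_{jk}^t \neq 0$. I would then introduce the perturbed matrix $\bar{Y}$ that agrees with $Y^*$ in every entry except $\bar{Y}_{jk} = U_{jk}^t$, and show that $\bar{Y}$ is feasible for \eqref{def_opt_unconstrained_acyclic} while strictly improving the objective, contradicting the optimality of $Y^*$.

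The crucial step is verifying that $\bar{Y}$ still satisfies $G(supp(\bar{Y})) \in \mathcal{A}$. Since $\pi^*$ is the topological order induced by $Y^*$, every arc $(u,v) \in supp(Y^*)$ obeys $\pi_u^* > \pi_v^*$ under the paper's convention. Passing from $Y^*$ to $\bar{Y}$ only inserts the single arc $(j,k)$ (it was absent because $Y_{jk}^* = 0$), and by hypothesis $\pi_j^* > \pi_k^*$, so \emph{every} arc of $G(supp(\bar{Y}))$ still respects the linear order $\pi^*$. A directed cycle $u_1 \to u_2 \to \cdots \to u_\ell \to u_1$ would force $\pi_{u_1}^* > \pi_{u_2}^* > \cdots > \pi_{u_1}^*$, which is impossible; hence $\pi^*$ remains a valid topological order of $G(supp(\bar{Y}))$, and by Property \ref{property_DAG_topological_order} the modified graph is a DAG. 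Note that this is precisely where the hypothesis $\pi_j^* > \pi_k^*$ is used, and it is what distinguishes this argument from Lemma \ref{lemma_zero_or_u}, where the feasibility of $\bar{Y}$ was automatic because an arc was being removed.

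Finally, because $\bar{Y}$ and $Y^*$ coincide off the $(j,k)$ entry, the objective difference telescopes to $\|Y^* - U^t\|_2^2 - \|\bar{Y} - U^t\|_2^2 = (0 - U_{jk}^t)^2 - (U_{jk}^t - U_{jk}^t)^2 = (U_{jk}^t)^2 > 0$, so $\bar{Y}$ is strictly better than $Y^*$, contradicting optimality and forcing $Y_{jk}^* = U_{jk}^t$. The only nontrivial part of the whole argument is the feasibility check above; once acyclicity is confirmed via the topological-order characterization, the rest is routine bookkeeping identical in spirit to Lemma \ref{lemma_zero_or_u}.
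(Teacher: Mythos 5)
Your proof is correct and follows essentially the same exchange argument as the paper: perturb the single entry $(j,k)$ of $Y^*$ to $U_{jk}^t$, verify feasibility via the topological order $\pi^*$, and conclude by strict improvement of the objective. The only cosmetic difference is that you invoke Lemma \ref{lemma_zero_or_u} to reduce to the case $Y_{jk}^* = 0$, whereas the paper treats the cases $Y_{jk}^* \neq 0$ and $Y_{jk}^* = 0$ separately; your acyclicity check is in fact spelled out more carefully than the paper's one-line assertion.
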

\begin{proof}
For a contradiction, let us assume that there exist indices $q$ and $r$ such that $Y_{qr}^* \neq U_{qr}^t$ while $\pi_q^* > \pi_r^*$. Let us create a new solution $\bar{Y}$ such that $\bar{Y} = Y^*$ except $\bar{Y}_{qr} = U_{qr}^t$. 
\begin{enumerate}
\item If $Y_{qr}^* \neq 0$, then $\bar{Y}$ is a DAG since $supp(\bar{Y}) \leq supp(Y^*)$ element-wise.
\item If $Y_{qr}^* = 0$, then arc $(q,r)$ can be used in the solution without creating a cycle because $\pi_q^* > \pi_r^*$. Hence, $\bar{Y}$ is a DAG.
\end{enumerate}
Therefore, $\bar{Y}$ is a feasible solution to \eqref{def_opt_unconstrained_acyclic}. However, it is easy to see that $\| \bar{Y} - U^t\|_2 < \| Y^* - U^* \|_2$ because $( Y_{qr}^* - U_{qr}^t)^2 > 0 = ( \bar{Y}_{qr} - U_{qr}^t)^2$. This contradicts optimality of $Y^*$.
\end{proof}
Given the topological order by $Y^*$, let $\hat{J}^k = \{ j \in J^k | \pi_j^* > \pi_k^* \}$ be the subset of $J^k$ such that the nodes in $\hat{J}^k$ are earlier than node $k$. Combining Lemmas \ref{lemma_zero_or_u} and \ref{lemma_toporder_u}, we conclude that $Y^*$ has the following structure.
\begin{equation}
\label{def_opt_y_structure}
\begin{tabular}{ll}
$Y_{jk}^*= \left \{
	\begin{array}{ll}
		U_{jk}^{t}  & \mbox{ if } j \in \hat{J}^k,  \\
		0 &  \mbox{ if } j \in J \setminus \hat{J}^k, 
	\end{array}
\right.$ & $k \in J$
\end{tabular}
\end{equation}
Further, we can calculate node $k$'s contribution to the objective function value without explicitly using $Y^*$.
\begin{lemma}
\label{lemma_contribution_to_obj}
For each node $k \in J$, it contributes $$\sum_{j \in J \setminus \hat{J}^k} ( U_{jk}^t )^2$$ to the objective function value for \eqref{def_opt_unconstrained_acyclic}. In other words, the contribution of node $k$ is the squared sum of $U_{jk}^t$ for nodes with $\pi_j^* < \pi_k^*$.
\end{lemma}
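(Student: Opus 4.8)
The plan is to treat this lemma as an immediate corollary of the structural characterization of $Y^*$ already established in \eqref{def_opt_y_structure}, together with the observation that the squared $L_2$ objective in \eqref{def_opt_unconstrained_acyclic} decomposes additively across the columns of the matrix. First I would write $\|Y^* - U^t\|_2^2 = \sum_{k \in J} \big[ \sum_{j \in J} (Y_{jk}^* - U_{jk}^t)^2 \big]$ and identify the inner sum over $j$, for a fixed $k$, as precisely ``node $k$'s contribution'' to the objective. Because the Frobenius-type norm splits with no cross terms between columns, this reduction is exact and isolates the claim to a single fixed $k$.

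Next I would substitute the two cases of \eqref{def_opt_y_structure} into that inner sum. For $j \in \hat{J}^k$ we have $Y_{jk}^* = U_{jk}^t$, so each such term equals $(U_{jk}^t - U_{jk}^t)^2 = 0$ and drops out; for $j \in J \setminus \hat{J}^k$ we have $Y_{jk}^* = 0$, so each such term equals $(0 - U_{jk}^t)^2 = (U_{jk}^t)^2$. Summing the two groups leaves exactly $\sum_{j \in J \setminus \hat{J}^k}(U_{jk}^t)^2$, which is the claimed contribution of node $k$.

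The only point requiring any care — and the sole place where the argument is not purely mechanical — is reconciling the index set $J \setminus \hat{J}^k$ appearing in the statement with the arc index range $J^k$, i.e., the diagonal entry $j = k$. Since the graph carries no self-loops, $U_{kk}^t = 0$ (and $Y_{kk}^* = 0$), so the $j = k$ term contributes nothing and it is harmless to sum over $J \setminus \hat{J}^k$ rather than over $J^k \setminus \hat{J}^k$; I would flag this explicitly so the index bookkeeping is unambiguous. Beyond this, I do not anticipate a genuine obstacle, because the substantive content — that each $Y_{jk}^*$ equals either $0$ or $U_{jk}^t$, and that the split is dictated entirely by the topological order $\pi^*$ — has already been supplied by Lemmas \ref{lemma_zero_or_u} and \ref{lemma_toporder_u} and packaged in \eqref{def_opt_y_structure}.
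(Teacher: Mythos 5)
Your proposal is correct and follows essentially the same argument as the paper: both identify node $k$'s contribution as the column sum $\sum_{j \in J}(Y_{jk}^*-U_{jk}^t)^2$ and then substitute the two cases of \eqref{def_opt_y_structure}, with the terms for $j \in \hat{J}^k$ vanishing and those for $j \in J \setminus \hat{J}^k$ reducing to $(U_{jk}^t)^2$. Your extra remark on the diagonal entry $j=k$ is a harmless bookkeeping point the paper leaves implicit.
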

\begin{proof}
For node $k$, we can derive
\begin{center}
$\sum_{j \in J} ( Y_{jk}^* - U_{jk}^t )^2 = \sum_{j \in J \setminus \hat{J}^k} ( Y_{jk}^* - U_{jk}^t )^2 = \sum_{j \in J \setminus \hat{J}^k} ( U_{jk}^t )^2$,
\end{center}
where both equal signs are due to \eqref{def_opt_y_structure}. The first equality is due to $Y_{jk}^* = U_{jk}^{t}$ for $j \in \hat{J}^k$ and the second equality holds since $Y_{jk}^* = 0$ for $j \in J \setminus \hat{J}^k$.  
\end{proof}

We next detail the derivation of the greedy algorithm presented in Algorithm \ref{algo_greedy}. Let $\bar{J} \subseteq J$ be the index set of yet-to-be-ordered nodes and $\bar{J}^c = J \setminus \bar{J}$ be the index set of the nodes that have already been ordered. The procedure is equivalent to iteratively solving
\begin{equation}
\label{def_greedy_prob}
\displaystyle k^* = \mbox{argmin}_{k \in \bar{J}} \Big\{ \min_{Y_k} \big\{ \sum_{j \in J} ( Y_{jk} - U_{jk}^t)^2 \big\} \Big\},
\end{equation}
where $Y_k = [Y_{1k},Y_{2k},\cdots,Y_{mk}] \in \mathbb{R}^{m \times 1}$ is the column in $Y$ corresponding to node $k$. Set $\bar{J}$ is updated by $\bar{J} = \bar{J} \setminus \{k^*\}$ and $\pi_{k^*}^{*} = |\bar{J}|$ after solving \eqref{def_greedy_prob}. We propose an algorithm to solve \eqref{def_greedy_prob} based on the properties of $Y^*$ described in Lemmas \ref{lemma_zero_or_u} - \ref{lemma_contribution_to_obj}. Given $\bar{J}$, we solve
\begin{equation}
\label{def_greedy_prob_algorithm}
k^* = \mbox{argmin}_{k \in \bar{J}} \Big\{ \sum_{j \in \bar{J}} (U_{jk}^t)^2 \Big\}.
\end{equation}
Next, we show that solving \eqref{def_greedy_prob_algorithm} gives an optimal solution $\bar{Y}_{jk^*}$, $j \in J^k$ to \eqref{def_greedy_prob}. We can actually replicate the properties of $Y^*$ for $\bar{Y}_{jk^*}$.
\begin{lemma}
An optimal solution to \eqref{def_greedy_prob} must have either $\bar{Y}_{jk^*} = 0$ or $\bar{Y}_{jk^*} = U_{jk^*}^t$ for all $j \in J^k$.
\end{lemma}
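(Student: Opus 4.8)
The plan is to replay the exchange argument of Lemma~\ref{lemma_zero_or_u} at the level of the single column that defines the inner minimization in \eqref{def_greedy_prob}. Fix the candidate node $k$ (in particular $k = k^*$) and recall that the inner problem $\min_{Y_k} \sum_{j \in J} (Y_{jk} - U_{jk}^t)^2$ is taken over columns $Y_k$ that respect the partial order fixed so far: an entry $Y_{jk}$ may be nonzero only if node $j$ is permitted to point into $k$, i.e.\ only if $j$ has already been assigned a higher order. Since the objective separates entrywise, it suffices to argue about each coordinate $\bar{Y}_{jk^*}$ independently.

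First I would suppose, for a contradiction, that an optimal column $\bar{Y}_{k^*}$ has a coordinate $q \in J^{k^*}$ with $\bar{Y}_{q k^*} \neq 0$ and $\bar{Y}_{q k^*} \neq U_{q k^*}^t$. I then build $\tilde{Y}$ that agrees with $\bar{Y}$ everywhere except $\tilde{Y}_{q k^*} = U_{q k^*}^t$. Because $\bar{Y}_{q k^*} \neq 0$, switching this single entry can only shrink or leave unchanged the support, so $supp(\tilde{Y}) \leq supp(\bar{Y})$ element-wise; consequently no new arc is introduced, the order placement of $k^*$ is untouched, and $\tilde{Y}$ remains feasible for \eqref{def_greedy_prob}. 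This is the exact feasibility mechanism used in Lemma~\ref{lemma_zero_or_u}, and it is the only delicate point of the argument: the replacement must be toward $U_{q k^*}^t$ rather than toward $0$, and the hypothesis $\bar{Y}_{q k^*} \neq 0$ is precisely what guarantees the arc $(q,k^*)$ was already present, so that acyclicity cannot be violated by the swap.

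Finally I would compare objective values. Since $(\bar{Y}_{q k^*} - U_{q k^*}^t)^2 > 0 = (\tilde{Y}_{q k^*} - U_{q k^*}^t)^2$ while every other summand is unchanged, $\tilde{Y}$ attains a strictly smaller objective than $\bar{Y}$, contradicting optimality. Hence every coordinate of the optimal column equals either $0$ or $U_{j k^*}^t$, which is the claim. The remaining work of the section---showing that \eqref{def_greedy_prob_algorithm} correctly identifies $k^*$---then follows by combining this dichotomy with the order-feasibility restriction and the contribution formula of Lemma~\ref{lemma_contribution_to_obj}, exactly as in the passage from Lemmas~\ref{lemma_zero_or_u}--\ref{lemma_contribution_to_obj} to \eqref{def_opt_y_structure}.
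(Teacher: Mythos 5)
Your argument is correct and is exactly the route the paper intends: the paper omits this proof, stating it is analogous to Lemma~\ref{lemma_zero_or_u}, and your column-level exchange argument (replace an entry that is neither $0$ nor $U_{qk^*}^t$ by $U_{qk^*}^t$, note the support shrinks so feasibility is preserved, and the objective strictly decreases) is precisely that analogue.
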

\begin{lemma}
An optimal solution to \eqref{def_greedy_prob} must have $\bar{Y}_{jk^*} = U_{jk^*}^t$ for $j \in J \setminus \bar{J}$.
\end{lemma}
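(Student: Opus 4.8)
The plan is to mirror the argument used for Lemma~\ref{lemma_toporder_u}, specialized to the single column $Y_{k^*}$ of the subproblem \eqref{def_greedy_prob}. Recall that in the greedy scheme the nodes in $J \setminus \bar{J}$ were removed from the active set in earlier iterations and therefore occupy the higher positions of the (partially) fixed topological order, so each $j \in J \setminus \bar{J}$ satisfies $\pi_j > \pi_{k^*}$ once $k^*$ is placed at position $|\bar{J}|$. The key observation is that an arc $(j,k^*)$ with $j \in J \setminus \bar{J}$ can always be included without creating a cycle; hence there is never a structural reason to set $\bar{Y}_{jk^*}$ to anything other than $U_{jk^*}^t$, and the claim will follow purely from optimality.

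I would argue by contradiction. Suppose an optimal $\bar{Y}$ for \eqref{def_greedy_prob} has some $j \in J \setminus \bar{J}$ with $\bar{Y}_{jk^*} \neq U_{jk^*}^t$. By the preceding lemma each entry equals either $0$ or the corresponding $U$-value, so necessarily $\bar{Y}_{jk^*} = 0$ and $U_{jk^*}^t \neq 0$ (if $U_{jk^*}^t = 0$ then $\bar{Y}_{jk^*} = 0 = U_{jk^*}^t$ and there is nothing to prove). I then form $\hat{Y}$ agreeing with $\bar{Y}$ everywhere except $\hat{Y}_{jk^*} = U_{jk^*}^t$. Since $\pi_j > \pi_{k^*}$, turning on the single arc $(j,k^*)$ leaves the graph acyclic, so $\hat{Y}$ remains feasible for \eqref{def_greedy_prob}. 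Comparing objectives, every term is unchanged except the $(j,k^*)$ term, which drops from $(0 - U_{jk^*}^t)^2 = (U_{jk^*}^t)^2 > 0$ to $0$; hence $\hat{Y}$ strictly improves the objective, contradicting the optimality of $\bar{Y}$.

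The only point requiring care is the feasibility claim, namely that setting $\hat{Y}_{jk^*} = U_{jk^*}^t$ does not violate the acyclicity constraint built into \eqref{def_greedy_prob}. This is precisely the observation already exploited for Lemma~\ref{lemma_toporder_u} and encoded in the structure \eqref{def_opt_y_structure}: because $j$ lies earlier than $k^*$ in the fixed order, the arc runs from a higher-order node to a lower-order one and closes no cycle. I expect this modest feasibility verification to be the only real obstacle, since the remainder of the argument is an exact transcription of the proof of Lemma~\ref{lemma_toporder_u} restricted to column $k^*$.
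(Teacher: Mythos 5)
Your proposal is correct and follows exactly the route the paper intends: the paper omits this proof, stating it is analogous to the proof of Lemma~\ref{lemma_toporder_u}, and your argument is precisely that contradiction-and-replacement argument restricted to column $k^*$, with the feasibility of adding arc $(j,k^*)$ justified by the fact that every $j \in J \setminus \bar{J}$ sits later in the fixed topological order than $k^*$. Your use of the preceding lemma to reduce to the case $\bar{Y}_{jk^*}=0$ is a harmless minor variation on the paper's two-case feasibility check.
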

\begin{lemma}
\label{lemma_opt_greedy}
An optimal solution to \eqref{def_greedy_prob} must satisfy $\sum_{j \in J} ( \bar{Y}_{jk^*} - U_{jk^*}^t)^2 = \sum_{j \in \bar{J}} (U_{jk^*}^t)^2$
\end{lemma}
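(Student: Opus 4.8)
The plan is to reduce the claim to the two lemmas immediately preceding it, combined with the acyclicity restriction that the greedy ordering forces on the column $\bar{Y}_{k^*}$, and then to split the objective of \eqref{def_greedy_prob} according to membership in $\bar{J}$.

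First I would pin down which entries of $\bar{Y}_{k^*}$ may be nonzero. At the step in question $\bar{J}$ is the set of yet-to-be-ordered nodes and the greedy rule assigns $k^*$ the order $q = |\bar{J}|$; hence every node of $\bar{J} \setminus \{k^*\}$ ultimately receives an order strictly smaller than that of $k^*$, while every node of $J \setminus \bar{J}$ already carries an order strictly larger than that of $k^*$. By the ordering convention that an arc $(j,k^*)$ is admissible only when the order of $j$ exceeds that of $k^*$, the entry $\bar{Y}_{jk^*}$ can be nonzero only for $j \in J \setminus \bar{J}$; for every $j \in \bar{J}$ (the self-index $k^*$ included) it is forced to $0$.

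Next I would feed in the two preceding lemmas. The second of them gives $\bar{Y}_{jk^*} = U_{jk^*}^t$ for all $j \in J \setminus \bar{J}$, so each such term contributes $(U_{jk^*}^t - U_{jk^*}^t)^2 = 0$. Splitting the objective as
\[
\sum_{j \in J} (\bar{Y}_{jk^*} - U_{jk^*}^t)^2 = \sum_{j \in J \setminus \bar{J}} (\bar{Y}_{jk^*} - U_{jk^*}^t)^2 + \sum_{j \in \bar{J}} (\bar{Y}_{jk^*} - U_{jk^*}^t)^2,
\]
the first sum vanishes by this substitution, while the second equals $\sum_{j \in \bar{J}} (0 - U_{jk^*}^t)^2 = \sum_{j \in \bar{J}} (U_{jk^*}^t)^2$ by the $\bar{Y}_{jk^*} = 0$ argument above, yielding the asserted identity.

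The only step needing genuine care is the claim that $\bar{Y}_{jk^*} = 0$ for $j \in \bar{J}$. The first preceding lemma alone only narrows each entry to $0$ or $U_{jk^*}^t$; what additionally eliminates the nonzero value for $j \in \bar{J} \setminus \{k^*\}$ is \emph{feasibility}, since those nodes are ordered after $k^*$ and the arc $(j,k^*)$ would therefore close a cycle. Once infeasibility rules out the nonzero option, the value is pinned to $0$ and the rest is the routine bookkeeping of the sum split. I expect this feasibility argument to be the main (and essentially only) obstacle.
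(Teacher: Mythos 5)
Your proof is correct and follows essentially the same route as the paper, which omits this proof as being analogous to that of Lemma \ref{lemma_contribution_to_obj}: split $\sum_{j \in J}(\bar{Y}_{jk^*}-U^t_{jk^*})^2$ over $J\setminus\bar{J}$ (where the preceding lemma forces $\bar{Y}_{jk^*}=U^t_{jk^*}$, killing those terms) and over $\bar{J}$ (where acyclicity forces $\bar{Y}_{jk^*}=0$). Your explicit justification of the zero entries via the ordering convention is exactly the feasibility argument implicit in the paper's structure result \eqref{def_opt_y_structure}, so there is no gap.
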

The proofs are omitted as they are similar to the proofs of Lemmas \ref{lemma_zero_or_u} - \ref{lemma_contribution_to_obj}, respectively. Note that, by Lemma \ref{lemma_opt_greedy}, we show the equivalence of $\sum_{j \in J} ( Y_{jk^*} - U_{jk^*}^t)^2$ and $\sum_{j \in \bar{J}} (U_{jk^*}^t)^2$. This result also holds for each term, $k \in \bar{J}$, of the argmin function in \eqref{def_greedy_prob}. Hence, the following lemma holds.
\begin{lemma}
Solving \eqref{def_greedy_prob_algorithm} is equivalent to solving \eqref{def_greedy_prob}. 
\end{lemma}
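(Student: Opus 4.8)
The plan is to collapse the nested optimization in \eqref{def_greedy_prob} so that it literally becomes \eqref{def_greedy_prob_algorithm}. The only structural difference between the two expressions is that \eqref{def_greedy_prob} carries an inner minimization $\min_{Y_k} \sum_{j \in J}(Y_{jk} - U_{jk}^t)^2$ inside the outer argmin, whereas \eqref{def_greedy_prob_algorithm} has already replaced that inner value by the explicit sum $\sum_{j \in \bar{J}}(U_{jk}^t)^2$. So it suffices to show that, for \emph{every} candidate $k \in \bar{J}$ (not merely the eventual minimizer $k^*$), the inner minimum equals $\sum_{j \in \bar{J}}(U_{jk}^t)^2$. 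Once this termwise identity is in hand, the two argmin problems have pointwise-equal objective functions over the common domain $\bar{J}$ and therefore share the same set of minimizers.

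First I would fix an arbitrary $k \in \bar{J}$ and observe that in the greedy step this node is being placed at the highest still-available order $q = |\bar{J}|$, so the nodes assigned a strictly higher order are exactly the already-ordered set $J \setminus \bar{J}$, independently of which $k$ is chosen. Consequently $\hat{J}^{k} = J \setminus \bar{J}$ for this hypothetical placement, and the arc $(j,k)$ may be used only when $j \in J \setminus \bar{J}$. Applying the structural characterization behind \eqref{def_opt_y_structure} (Lemmas \ref{lemma_zero_or_u} and \ref{lemma_toporder_u}) to the single column $Y_k$, the minimizer sets $Y_{jk} = U_{jk}^t$ for $j \in J \setminus \bar{J}$ and $Y_{jk} = 0$ for $j \in \bar{J}$, exactly as in Lemma \ref{lemma_contribution_to_obj}. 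This yields $\min_{Y_k} \sum_{j \in J}(Y_{jk} - U_{jk}^t)^2 = \sum_{j \in \bar{J}}(U_{jk}^t)^2$, which is precisely the content of Lemma \ref{lemma_opt_greedy} read off for this particular $k$.

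With the inner value computed, I would substitute it back into \eqref{def_greedy_prob} to obtain $k^* = \mbox{argmin}_{k \in \bar{J}} \sum_{j \in \bar{J}}(U_{jk}^t)^2$, which is \eqref{def_greedy_prob_algorithm}; since the argmin is now taken over identical objective values term by term, the minimizing index (and, via \eqref{def_opt_y_structure}, the associated optimal column) coincides. The main obstacle, such as it is, is conceptual rather than computational: one must verify that Lemma \ref{lemma_opt_greedy}, originally phrased for the selected node $k^*$, applies verbatim to each candidate $k \in \bar{J}$. This holds because the set of higher-order nodes $J \setminus \bar{J}$ is the same for all candidates at the current stage, so the derivation of the closed-form inner value never invokes the optimality of $k$ and is valid uniformly across $\bar{J}$.
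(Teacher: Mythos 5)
Your proposal is correct and follows essentially the same route as the paper: establish that the inner minimum in \eqref{def_greedy_prob} equals $\sum_{j \in \bar{J}}(U_{jk}^t)^2$ for every candidate $k \in \bar{J}$ (not just the minimizer), via the structural characterization of the optimal column, and conclude that the two argmin problems have identical objectives termwise. Your explicit observation that the set of already-ordered nodes $J \setminus \bar{J}$ is the same regardless of which candidate is placed next is exactly the point the paper leaves implicit when it asserts that the result of Lemma \ref{lemma_opt_greedy} "also holds for each term."
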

Observe that \eqref{def_greedy_prob_algorithm} is used in Line 3 of the greedy algorithm in Algorithm \ref{algo_greedy}. Hence, by property \eqref{def_greedy_prob}, the algorithm gives an optimal choice of node to be fixed with the pre-fixed topological order.

\section{Summary Statistics for Maximum Coefficients}
\label{Appendix_max_coef}

In this section, we show that the heuristic formula \eqref{eqn_bigM} for selecting big M gives reasonable and large enough values. Note that, when creating a synthetic instance, we used a random DAG to generate multivariate data. Although the optimal DAG for the penalized least squares are unknown, with appropriate penalty constants, we can use the implanted DAG to obtain coefficients estimation. That is, we calculate $$ \hat{B} = \max_{j \in J^k, k \in K} |\hat{\beta}_{jk}|$$ by using the implanted DAG. The $\hat{B}$ value is then compared with the big $M$ value in \eqref{eqn_bigM} for all instances used for MIP models (sparse data with $n=100$ and $m \in \{20,30,40\}$). We calculate minimum, average, and maximum of $\hat{B}$ and $M$ by $m,s,$ and $\lambda$. The result is presented in Table \ref{tab_max_coef}. Note that we have $\hat{B} < M$ for all columns Min, Avg, and Max and for all rows. Although only the summary statistics are presented in Table \ref{tab_max_coef}, we observed that $M$ is greater than $\hat{B}$ for all cases considered.

\begin{table}[htbp]
  \centering
  \small
    \begin{tabular}{|r|r|rrr|rrr|}
    \hline
\multicolumn{2}{|c|}{By}       & \multicolumn{3}{c|}{$\hat{B}$} & \multicolumn{3}{c|}{$M$} \\ \hline
  Param        &  Value     & Min   & Avg   & Max   & Min   & Avg   & Max \\ \hline
    \multicolumn{1}{|c|}{\multirow{3}[0]{*}{m}} & 20    & 0.17  & 0.65  & 1.48  & 0.33  & 1.34  & 3.73 \\
    \multicolumn{1}{|c|}{} & 30    & 0.18  & 0.62  & 0.95  & 0.36  & 1.32  & 3.10 \\
    \multicolumn{1}{|c|}{} & 40    & 0.18  & 0.63  & 1.03  & 0.36  & 1.33  & 3.30 \\ \hline
    \multicolumn{1}{|c|}{\multirow{3}[0]{*}{s}} & 1     & 0.17  & 0.53  & 0.79  & 0.33  & 1.13  & 2.04 \\ 
    \multicolumn{1}{|c|}{} & 2     & 0.23  & 0.63  & 0.94  & 0.45  & 1.33  & 3.54 \\
    \multicolumn{1}{|c|}{} & 3     & 0.32  & 0.74  & 1.48  & 0.63  & 1.53  & 3.73 \\ \hline
    \multicolumn{1}{|c|}{\multirow{4}[0]{*}{$\lambda$}} & 0.05  & 0.62  & 0.82  & 1.48  & 1.24  & 1.95  & 3.73 \\ 
    \multicolumn{1}{|c|}{} & 0.1   & 0.60  & 0.78  & 0.96  & 1.19  & 1.54  & 2.35 \\
    \multicolumn{1}{|c|}{} & 0.5   & 0.41  & 0.59  & 0.74  & 0.82  & 1.14  & 1.45 \\
    \multicolumn{1}{|c|}{} & 1     & 0.17  & 0.35  & 0.49  & 0.33  & 0.69  & 0.97 \\ \hline
    \end{tabular}%
      \caption{Comparison of maximum coefficients and big M (sparse data with $n = 100$ and $m \in \{20,30,40\}$)}
  \label{tab_max_coef}%
\end{table}%

\end{document}